\pdfoutput=1

\documentclass[11pt]{article}

\usepackage[final]{acl}

\usepackage{times}
\usepackage{latexsym}

\usepackage[T1]{fontenc}

\usepackage[utf8]{inputenc}

\usepackage{microtype}

\usepackage{inconsolata}

\usepackage{hhline} 
\usepackage{cellspace} 

\usepackage{arydshln}
\usepackage{graphicx}
\usepackage{booktabs}
\usepackage{multirow}
\usepackage{amsthm}
\usepackage{enumitem}
\usepackage{amsmath,amsfonts}
\usepackage{subfigure}
\usepackage{bm}

\newtheorem{lemma}{Lemma}
\newtheorem{proposition}{Proposition}
\newtheorem{myexample}{Example}
\newcommand{\name}{\textsf{RoseLoRA}}
\usepackage{setspace}
\AtBeginDocument{%
  \addtolength\abovedisplayskip{-0.05\baselineskip}%
  \addtolength\belowdisplayskip{-0.05\baselineskip}%
  \addtolength\abovedisplayshortskip{-0.05\baselineskip}%
  \addtolength\belowdisplayshortskip{-0.05\baselineskip}%
}
\usepackage{titlesec}
\titlespacing*{\section}{0pt}{0.25\baselineskip}{0.25\baselineskip}
\titlespacing*{\subsection}{0pt}{0.25\baselineskip}{0.25\baselineskip}
\titlespacing*{\subsubsection}{0pt}{0.25\baselineskip}{0.25\baselineskip}
%
%

\title{{\name}: Row and Column-wise Sparse Low-rank Adaptation of Pre-trained Language Model for Knowledge Editing and Fine-tuning}



\author{Haoyu Wang$^\dagger{}$, Tianci Liu$^{\S}$, Ruirui Li$^\star{}$, Monica Xiao Cheng$^\star{}$, Tuo Zhao$^*$, and Jing Gao$^{\S}$ \\
  $^\dagger{}$SUNY Albany, Albany, NY, USA\\
  $^{\S}$Purdue University, West Lafayette, IN, USA\\
  $^*$Georgia Institute of Technology, Atlanta, GA, USA\\
  $^\star{}$Amazon, Palo Alto, CA, USA\\
  \texttt{$^\dagger{}$hwang28@albany.edu},
  \texttt{$^{\S}$\{liu3351,jinggao\}@purdue.edu}, \\\texttt{$^*$tourzhao@gatech.edu} , \texttt{$^\star{}$\{ruirul,chengxca\}@amazon.com}}


\definecolor{amaranth}{rgb}{0.9, 0.17, 0.31}

\begin{document}
\maketitle
\begin{abstract}
Pre-trained language models, trained on large-scale corpora, demonstrate strong generalizability across various NLP tasks. Fine-tuning these models for specific tasks typically involves updating all parameters, which is resource-intensive. Parameter-efficient fine-tuning (PEFT) methods, such as the popular LoRA family, introduce low-rank matrices to learn only a few parameters efficiently. However, during inference, the product of these matrices updates all pre-trained parameters, complicating tasks like knowledge editing that require selective updates. We propose a novel PEFT method, which conducts \textbf{r}ow and c\textbf{o}lumn-wise spar\textbf{se} \textbf{lo}w-\textbf{r}ank \textbf{a}daptation ({\name}), to address this challenge. {\name} identifies and updates only the most important parameters for a specific task, maintaining efficiency while preserving other model knowledge. By adding a sparsity constraint on the product of low-rank matrices and converting it to row and column-wise sparsity, we ensure efficient and precise model updates. Our theoretical analysis guarantees the lower bound of the sparsity with respective to the matrix product.
Extensive experiments on five benchmarks across twenty datasets demonstrate that {\name} outperforms baselines in both general fine-tuning and knowledge editing tasks.

\end{abstract}

\section{Introduction}
Pre-trained language models, trained on extensive and diverse general-domain corpora, exhibit robust generalization capabilities, benefiting various natural language processing (NLP) tasks, such as natural language understanding~\cite{kenton2019bert,liu2019roberta} and generation~\cite{touvron2023llama,ouyang2022training}. To further adapt these pre-trained models to a specific downstream task, fine-tuning is typically performed. 
However, these models often comprise numerous parameters, rendering full fine-tuning  
resource-intensive.

To address this challenge, parameter-efficient fine-tuning~(PEFT) methods~\cite{ding2023parameter,han2024parameter} are proposed. These method introduce a small number of learnable parameters and update only the lightweight introduced parameters during fine-tuning.
Among existing methods, LoRA family~\cite{hu2021lora,zhang2023adaptive,ding2023parameter,liu2024dora} has gained remarkable popularity because of its high efficiency and good performance. Conceptually, these LoRA methods add new low-rank matrices to model weights for fine-tuning.
Unlike other PEFT methods such as Adapter \citep{houlsby2019parameter}, LoRA family does not modify the model architecture and is easier to incorporate.

LoRA family has demonstrated notable performance on tasks, such as commonsense reasoning and arithmetic reasoning~\cite{hu2023llm,liu2024dora}, that mainly rely on a language model's ability to understand and generate text without requiring to modify its internal knowledge explicitly. 
However, some specialized tasks require updating this internal knowledge. 
For instance, in knowledge editing~\cite{zhang2024comprehensive,de2021editing}, a language model should incorporate new provided knowledge while preserving other existing knowledge simultaneously.  
On such tasks, the LoRA family of methods are less-suited due to the coarse-grained control they offer.
In particular, 
the product of the low-rank matrices introduced by LoRA methods is a dense matrix, which is added to the pre-trained model weights during inference. Consequently, all pre-trained parameters are updated, making it challenging to selectively modify specific internal knowledge. 
This motivates a natural question: 
\textbf{\textit{Is there a PEFT method that can be effectively employed for tasks that require editing the internal knowledge of language models?}}

To answer this question, we propose a \textbf{r}ow and c\textbf{o}lumn-wise spar\textbf{se} \textbf{lo}w-\textbf{r}ank \textbf{a}daptation method~({\name}). 
The motivation is to identify and update only the most important and influential parameters in the pre-trained model concerning a specific task. 
In this way, the pre-trained model can be updated effectively with minimal impacts on knowledge that does not require modification.
Specifically, {\name} inherits the structure of LoRA to enable parameter-efficient fine-tuning.
To selectively fine-tune the most important parameters, we introduce a sparsity constraint, i.e., the $\ell_0$ norm, on the product of the low-rank matrices. 
However, this constraint is non-trivial to optimize.
While $\ell_{0}$ norm constraint is widely explored in model pruning~\citep{zhu2017prune,wang2019structured,sun2023simple}, these methods can only address the sparsity constraint on each low-rank matrix individually. 
Unfortunately, even if each low-rank matrix is sparse, this does not guarantee that their product will be sparse. To overcome this challenge, we propose converting the original sparsity constraint to row and column-wise sparsity constraints on two low-rank matrices (i.e., $\bm{B}$ and $\bm{A}$ in LoRA). We provide a theoretical lower bound of the sparsity of the product of the two low-rank matrices. Furthermore, we propose using a sensitivity-based importance score to incrementally solve the row and column-wise sparsity constraints.

Beyond knowledge editing, the proposed {\name} can also be applied to other general tasks, e.g., commonsense and arithmetic reasoning, instruction following, and natural language understanding. {\name} updates the few most important parameters of the model via enforcing the row or column-wise sparsity for the low-rank matrices, and can match or even outperform LoRA performance with significantly fewer modified parameters.

The contributions are summarized as follows: 1)~We propose {\name}, a novel PEFT method that detects and optimizes the most important task-related parameters, resulting in highly precise and effective model updates while being more lightweight than existing methods. 2)~We propose a novel row and column-wise sparsity constraint to control the sparsity of the product of two low-rank matrices. Additionally, we provide a theoretical sparsity lower bound for the proposed {\name}. 3)~We conduct extensive experiments on five benchmarks covering over twenty datasets. The experiments show that the proposed {\name} can outperform baselines on both general fine-tuning tasks and knowledge editing tasks.

\section{Related Works}
\vspace{-0.05in}
In this section we provide a concise overview of related works. 
\subsection{Parameter Efficient Fine-Tuning (PEFT)}
PEFT injects a small fraction of trainable parameters into pre-trained large language models (LLMs) to adapt them to downstream tasks.
Prefix Tuning~\citep{li2021prefix} prepends soft tokens to the input and learns their continuous embeddings while keeping the original parameters frozen. Adapter~\citep{houlsby2019parameter,he2021effectiveness}, on the other hand, inserts lightweight bottleneck neural network modules into the transformer blocks.
The third paradigm, LoRA and its variants \citep{hu2021lora, zhang2023adaptive, ding2023sparse,dettmers2024qlora,li2023loftq,liu2024dora}, learns low-rank matrices to approximate the desired updates of the original model weights and has achieved state-of-the-art performance. 
Recently, ReFT \citep{wu2024reft} learns low-rank updates on model representations instead of weights and achieves performance comparable to LoRA with significantly fewer parameters. However, the underlying linear representation hypothesis may not hold valid \citep{engels2024not}, which greatly undermines its generalization ability.
In this work, we propose an effective method to learn sparse and low-rank updates on model weights, demonstrating superior performance using as few parameters as ReFT. Recent works such as AdaLoRA \citep{zhang2023adaptive} and SoRA \citep{ding2023sparse} have applied pruning to LoRA to increase its computational efficiency. 
However, it is worth mentioning that the proposed {\name} is significantly different from these methods. 
In particular, these works prunes to control the rank of learned model updates, but the updates are still dense in the sense that all parameters are affected, and cannot offer precise updates as {\name} thereof.

\begin{figure*}[htb!] 
\centering        
\includegraphics[width=0.95\textwidth]{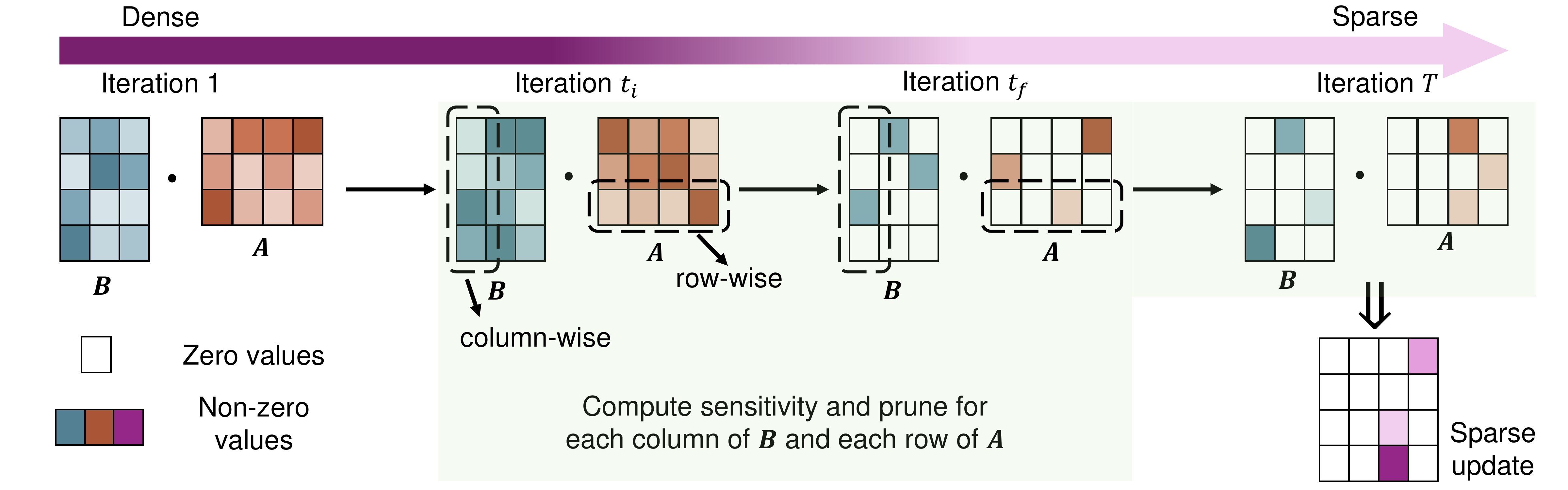} 
\caption{The framework of proposed \name.} 
\label{fig:framework}
\end{figure*}

\subsection{Knowledge Editing}
Knowledge editing seeks to update outdated knowledge in pre-trained LLMs to accommodate a dynamic world. Early efforts involved fine-tuning their parameters directly but suffered from severe forgetting of original knowledge~\cite{wang2023knowledge}. For more precise editing, only a minimal amount of parameters should be updated \citep{wang2023knowledge}. This requires sparse parameter updates, which proves NP-hard to solve \cite{natarajan1995sparse}. As a workaround, \citet{zhu2020modifying} used a relaxed $L_2$ norm constraint on the updates, and \citet{huang2023transformer, dong2022calibrating} limited the updates to feed-forward network (FFN) layers based on findings that learned knowledge is often stored in these layers \citep{dai2021knowledge}. For further refinement, the locate-and-edit paradigm \citep{meng2022locating, meng2022mass} identifies the layer storing specific knowledge and then modifies its parameters. Nonetheless, \citep{hase2024does} found that updating parameters other than the located ones can also achieve competitive editing performance, questioning the extent to which the more computationally expensive locating process benefits editing. 

Alternative solutions restore to external memory without updating original parameters, such as MEND~\cite{mitchell2021fast}, IKE~\cite{zheng2023can}, and SERAC~\cite{mitchell2022memory}. However, these methods require hard-to-access data to retrieve from (e.g., IKE) or to train extra models on (e.g., MEND and SERAC), which limits their practicality.
Recently, LoRA has also been applied for knowledge editing \citep{wu2023eva}. 
However, they do not provide the aforementioned sparsity guarantee, which will be discussed shortly in the next section, so they are less effective and show unsatisfactory performance~\citep{zhang2024comprehensive}.

\section{Preliminary}
In this section, we first briefly introduce the low-rank adaptation~(LoRA) and then introduce importance-aware pruning. 
\subsection{Low-rank Adaptation}
The LoRA models the efficient incremental update of pre-trained language models via the product of two learnable low-rank matrices. Specifically, the modified weight $\bm{W}$ can be represented as
\begin{align}
    \bm{W}=\bm{W}^{o}+\Delta=\bm{W}^{o}+\bm{B}\bm{A},
\end{align}
where $\bm{W}^{o}, \Delta\in\mathbb{R}^{d_{1}\times d_{2}}$ are the pre-trained weight matrix and the updated matrix respectively, $\bm{A}\in \mathbb{R}^{r\times d_{2}}$ and $\bm{B}\in \mathbb{R}^{d_{1}\times r}$ with $r\ll \min\{d_{1},d_{2}\}$. During fine-tuning, the pre-trained weight $\bm{W}^{o}$ is frozen and only lightweight matrices $\bm{A}$ and $\bm{B}$ will be updated, which can be formulated as 
\begin{align}
    \min_{\bm{A},\bm{B}}\ \mathcal{L}(\mathcal{D};\bm{W}^{o}+\bm{B}\bm{A}),
\end{align}
where $\mathcal{D}$ is the training dataset.

\subsection{Sensitivity-based Importance Score for Pruning}
Importance-aware pruning~\cite{sanh2020movement, han2015learning, molchanov2019importance, zhang2022platon, li2023losparse} aims to identify and set redundant model weights to zero based on estimated importance scores. Parameters with high importance scores are retained, while others are set to zero. Sensitivity~\cite{sanh2020movement, molchanov2019importance, li2023losparse} is a popular importance metric that measures the approximate change in training loss when setting a parameter to zero. Formally, the sensitivity with respect to weight $\bm{W}_{ij}$ is defined by the product of the weight and its corresponding gradient:
\begin{align}
    I(\bm{W}_{ij})=|\bm{W}_{ij}\cdot \nabla_{W_{ij}}\mathcal{L}|.
\end{align}
We denote the sensitivity at the $t$-th iteration based on the current mini-batch as $I^{(t)}$. To reduce the variance of sensitivity, \citet{zhang2022platon} proposed to apply exponential moving average for smoothing:
\begin{align}
\label{eq:sensitivity}
    \bar{I}^{(t)}(\bm{W}_{ij})=\beta \bar{I}^{(t-1)}(\bm{W}_{ij})+(1-\beta)I^{(t)},
\end{align}
where $\beta$ is a hyper-parameter.

\section{Methodology}
\label{sec:method}
To efficiently fine-tune a pre-trained language model with selective updating, we propose \name, a novel LoRA-style fine-tuning framework with sparse adaptation. The framework is illustrated in Figure~\ref{fig:framework}. We introduce row and column-wise sparsity constraints on the two low-rank matrices, respectively. We theoretically prove that the sparsity lower bound of the product of these low-rank matrices can be guaranteed under these constraints.
\subsection{Row and Column-wise Sparse Low-rank Adaptation}
We aim to update minimal parameters to enable the model to fit the training data, retain more previous knowledge, and become more lightweight. To achieve this goal, we build on the popular and effective parameter-efficient fine-tuning method LoRA, resulting in the following loss function:
\begin{align}
\label{eq:loss}
    \notag\min_{\bm{A},\bm{B}} & \quad \mathcal{L}(\mathcal{D};\bm{W}^{o}+\bm{B}\bm{A})\\
    \text{s.t.} &\quad \frac{\|\bm{B}\bm{A}\|_{0}}{d_{1}d_{2}}\leq \tau,
\end{align}
where $\tau$ is the sparsity threshold. However, Eqn.~\ref{eq:loss} is challenging to handle, with difficulty lie in two-fold.
First, the $\ell_{0}$ optimization is NP-hard.
Despite that some effective approximate solutions have been proposed \citep{zhu2017prune, wang2019structured, sun2023simple}, 
they cannot be applied directly.
In particular, due to the complex product-based parameterization, 
it is extremely hard to learn parameters in $\bm A, \bm B$ even if we know which entries in their product $\bm B \bm A$ should be 0. Furthermore, simply controlling the sparsity of $\bm{B}$ and $\bm{A}$ may not work, as shown in Example~\ref{exp:example}.

\begin{myexample}
\label{exp:example}
Let $s(\cdot)$ represent the sparsity (i.e., the portion of zero entries) of a vector or matrix. For sparse matrix $\bm{A}=[\bm{a}^\top;\bm{0}^{(r-1)\times d_{2}}]$ and $\bm{B}=[\bm{b},\bm{0}^{d_{1}\times(r-1)}]$, where $\bm{a}$ and $\bm{b}$ contains non-zero entries, we have $s(\bm A) = s(\bm B) = \frac{r-1}{r}$ that is reasonably large for $r > 1$. However, $s(\bm B \bm A) = s(\bm b \bm a^\top) = 0$, i.e., the product is a dense matrix.
\end{myexample}

To summarize, it is non-trivial to incorporate sparsity in LoRA. To address this challenge, we propose controlling the sparsity of each row of $\bm{A}$ and each column of $\bm{B}$. In this way, the sparsity of $\bm{B}\bm{A}$ can be bounded by $s(\bm{A}_{i*})$ and $s(\bm{B}_{*i})$. We present the theoretical analysis in Proposition~\ref{prop:2} and the empirical results in Fig.~\ref{fig:bound}. Based on this finding, we can convert the optimization problem in Eqn.~\ref{eq:loss} as the following problem:
\begin{align}
\label{eq:final_loss}
    \notag&\min_{\bm{A},\bm{B}}\quad \mathcal{L}(\mathcal{D};\bm{W}^{o}+\bm{B}\bm{A})\\
    &\text{s.t.} \quad \frac{\|\bm{A}_{i*}\|_{0}}{d_{2}}\leq \tau, \frac{\|\bm{B}_{*i}\|_{0}}{d_{1}}\leq \tau, i=1,...,r.
\end{align}

\begin{proposition}
\label{prop:2}
The sparsity of $\bm{B}\bm{A}$ is greater or equal to $\max\{0,1+\sum_{i=1}^{r}(s(\bm{A}_{i*})+s(\bm{B}_{*i})-s(\bm{A}_{i*})s(\bm{B}_{*i}))-r\}$.
\end{proposition}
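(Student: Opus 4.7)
The plan is to exploit the decomposition $\bm{B}\bm{A} = \sum_{i=1}^{r} \bm{B}_{*i}\bm{A}_{i*}$, i.e., the matrix product is the sum of $r$ rank-one matrices. First, I would analyze a single rank-one term $\bm{B}_{*i}\bm{A}_{i*}$: its $(j,k)$ entry equals $\bm{B}_{ji}\bm{A}_{ik}$, which is zero whenever either $\bm{B}_{ji}=0$ or $\bm{A}_{ik}=0$. A direct inclusion-exclusion count of positions $(j,k)$ satisfying this condition yields that the sparsity of $\bm{B}_{*i}\bm{A}_{i*}$ equals exactly
\begin{align*}
s(\bm{B}_{*i}\bm{A}_{i*}) = s(\bm{A}_{i*}) + s(\bm{B}_{*i}) - s(\bm{A}_{i*})s(\bm{B}_{*i}).
\end{align*}

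Next, let $Z_i \subseteq \{1,\dots,d_1\}\times\{1,\dots,d_2\}$ denote the set of index pairs $(j,k)$ at which $\bm{B}_{*i}\bm{A}_{i*}$ vanishes. A sufficient (though not necessary) condition for the $(j,k)$ entry of $\bm{B}\bm{A}$ to be zero is that every rank-one summand vanishes at $(j,k)$, i.e., $(j,k)\in\bigcap_{i=1}^{r} Z_i$. Therefore the zero-set of $\bm{B}\bm{A}$ contains $\bigcap_{i=1}^{r}Z_i$, and this containment is where any possible cancellations among the rank-one terms can only help (never hurt) the lower bound.

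The final step is to lower-bound $|\bigcap_{i=1}^{r}Z_i|$. Using the elementary inequality $|U\cap V|\geq |U|+|V|-d_1 d_2$ and iterating $r-1$ times, I obtain
\begin{align*}
\bigl|\bigcap_{i=1}^{r} Z_i\bigr| \;\geq\; \sum_{i=1}^{r}|Z_i| - (r-1)d_1 d_2.
\end{align*}
Dividing by $d_1 d_2$ and plugging in $|Z_i|/(d_1 d_2) = s(\bm{A}_{i*}) + s(\bm{B}_{*i}) - s(\bm{A}_{i*})s(\bm{B}_{*i})$ gives the bound $1+\sum_{i=1}^{r}(s(\bm{A}_{i*})+s(\bm{B}_{*i})-s(\bm{A}_{i*})s(\bm{B}_{*i}))-r$. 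Finally, since sparsity is non-negative by definition, taking the maximum with $0$ yields the claim.

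The main conceptual obstacle is recognizing that the argument must be purely combinatorial: one should not attempt to reason about $s(\bm{B}\bm{A})$ as a sum of the $s(\bm{B}_{*i}\bm{A}_{i*})$, since cancellations in the summation can only increase, not decrease, the zero count, so a sufficient-condition/intersection viewpoint on the zero sets is the right tool. The remaining routine challenge is carefully justifying the iterated union-bound step and checking the constant $(r-1)$ that appears there.
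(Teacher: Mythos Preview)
Your proposal is correct and follows essentially the same approach as the paper's proof. The paper first establishes the rank-one sparsity formula $s(\bm{B}_{*i}\bm{A}_{i*}) = s(\bm{A}_{i*}) + s(\bm{B}_{*i}) - s(\bm{A}_{i*})s(\bm{B}_{*i})$ as a separate lemma (argued row-by-row rather than via your entrywise inclusion--exclusion, but equivalently), and then bounds the zeros of $\bm{B}\bm{A}$ by considering the ``worst case'' in which the nonzero supports of the rank-one summands are disjoint---which is exactly the dual formulation of your iterated intersection bound $|\bigcap_i Z_i| \geq \sum_i |Z_i| - (r-1)d_1d_2$.
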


\begin{figure}[htb!] 
\centering        
\includegraphics[width=1\columnwidth]{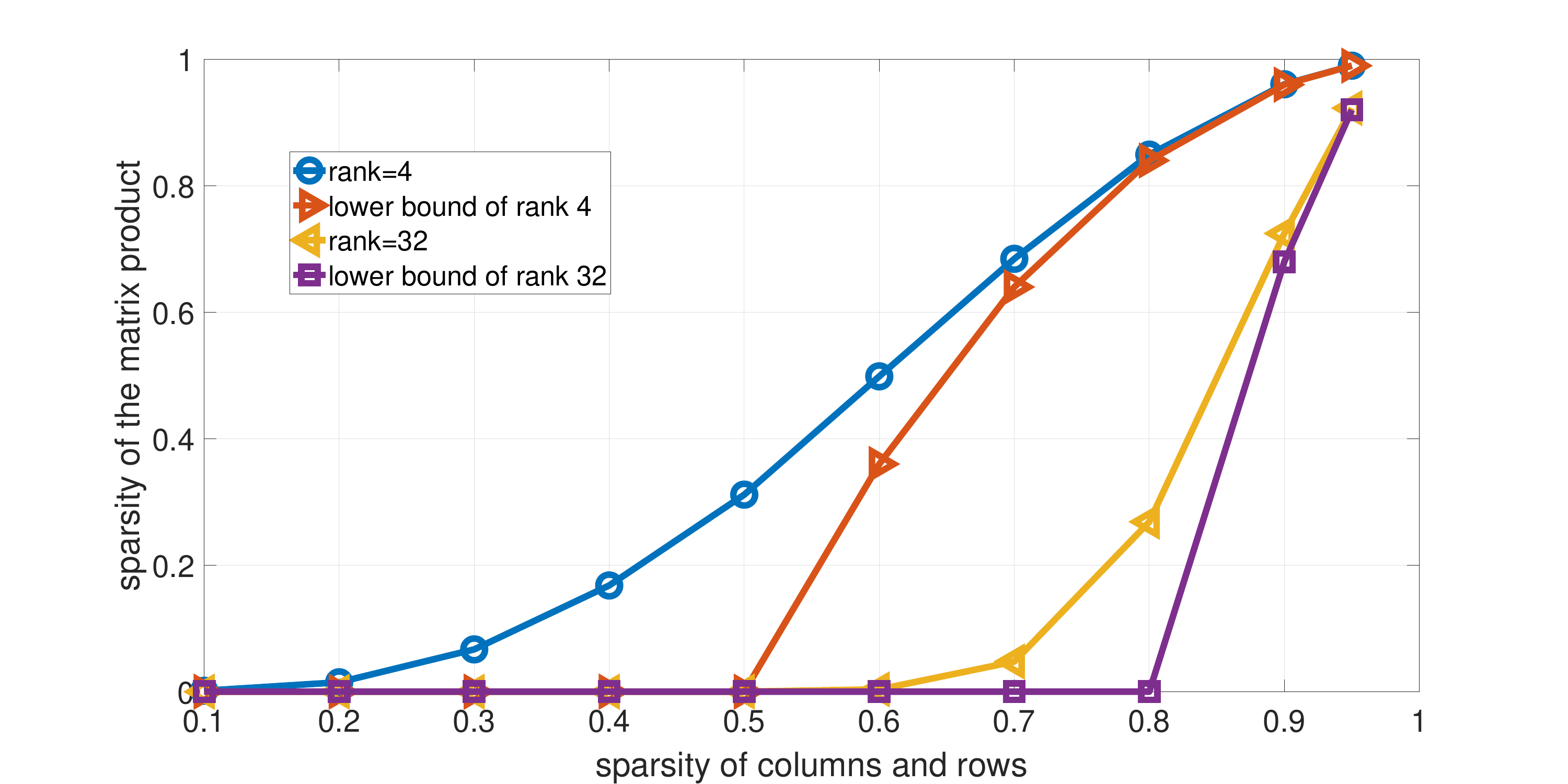} 
\caption{The sparsity of the product of matrix $\bm{B}$ and $\bm{A}$ with different column and row sparsity.} 
\label{fig:bound}
\end{figure}

\subsection{Optimization}
In this section, we present how to solve the optimization problem in Eqn.~\ref{eq:final_loss}. We prune each row of $\bm{A}$ and each column of $\bm{B}$ based on sensitivity iteratively. Specifically, we first conduct stochastic gradient decent with respective to $\bm{A}$ and $\bm{B}$, i.e.
\begin{align}
    \notag\tilde{\bm{A}}^{(t)}=\bm{A}^{(t)}-\nabla_{\bm{A}^{(t)}}\mathcal{L},\\
    \tilde{\bm{B}}^{(t)}=\bm{B}^{(t)}-\nabla_{\bm{B}^{(t)}}\mathcal{L}.
\end{align}
Then, we estimate the sensitivity-based importance scores based on Eqn.~\ref{eq:sensitivity}. Given the importance scores, the $\bm{A}$ and $\bm{B}$ are pruned following
\begin{align}
\notag\bm{A}^{(t+1)}_{i*}=\mathcal{T}_{A}(\tilde{\bm{A}}^{(t)}_{i*},\bar{I}^{(t)}(\bm{A}^{(t)}_{i*})),\\
\bm{B}^{(t+1)}_{*i}=\mathcal{T}_{B}(\tilde{\bm{B}}^{(t)}_{*i},\bar{I}^{(t)}(\bm{B}^{(t)}_{*i})),
\end{align}
where $i=1,2,...,r$, $\mathcal{T}_{A}$ is defined as
\begin{align}
    \notag&(\mathcal{T}_{A}(\tilde{\bm{A}}^{(t)}_{i*},\bar{I}^{(t)}(\bm{A}^{(t)}_{i*})))_{j}\\
    &=\left\{
\begin{aligned}
\notag\tilde{\bm{A}}^{(t)}_{ij} & , & \bar{I}^{(t)}(\bm{A}^{(t)}_{ij}) \text{ is top-}\tau^{(t)} \text{ in } \bar{I}^{(t)}(\bm{A}^{(t)}_{i*}), \\
0 & , & \text{otherwise},
\end{aligned}
\right.
\end{align}
and $\mathcal{T}_{B}$ is defined as
\begin{align}
    \notag&(\mathcal{T}_{B}(\tilde{\bm{B}}^{(t)}_{*i},\bar{I}^{(t)}(\bm{B}^{(t)}_{*i})))_{j}\\
    &=\left\{
\begin{aligned}
\notag\tilde{\bm{B}}^{(t)}_{ji} & , & \bar{I}^{(t)}(\bm{B}^{(t)}_{ji}) \text{ is top-}\tau^{(t)} \text{ in } \bar{I}^{(t)}(\bm{B}^{(t)}_{*i}), \\
0 & , & \text{otherwise}.
\end{aligned}
\right.
\end{align}
Here, $\tau^{(t)}$ is the budget of the percentage of remaining parameters at the $t$-iteration. To enable the optimization to be more stable, we decrease the number of $\tau^{(t)}$ gradually following the cubic strategy~\cite{li2023losparse}:
\begin{align}
\small
   \notag \tau^{(t)}=\left\{
\begin{aligned}
&1  , & 1\leq t\le t_{i}, \\
&\tau+(1-\tau)\left(1-\frac{t-t_{i}}{t_{f}-t_{i}}\right)^{3}  , & t_{i}\leq t\le t_{f},\\
&\tau, & t_{f}\leq t\leq T,
\end{aligned}
\right.
\end{align}
where $T$ is the number of total training iterations, and $t_{i}, t_{f}$ are hyper-parameters.

\section{Experiment}
In the experiments, we evaluate the proposed {\name} and answer the following questions: \textbf{RQ1})~How does the proposed {\name} benefit knowledge editing tasks? \textbf{RQ2})~How does {\name} perform compared to state-of-the-art PEFT methods on general tasks? \textbf{RQ3})~Does the proposed {\name} alleviate the model forgetting issue? \textbf{RQ4})~How does the performance change with varying amounts of training data?

\subsection{Datasets and Experiment Settings}
\paragraph{Datasets.} 
We conduct experiments on five different benchmarks: 
1)~\textbf{Knowledge Editing}, including WikiData$_{\text{recent}}$, WikiData$_{\text{counterfact}}$ \citep{cohen2024evaluating}, ZsRE \citep{yao2023editing}, and WikiBio \citep{hartvigsen2024aging}; 
2)~\textbf{Commonsense Reasoning}, including BoolQ \citep{clark2019boolq}, PIQA \citep{bisk2020piqa}, SIQA \citep{sap2019socialiqa}, HellaSwag \citep{zellers2019hellaswag}, WinoGrande \citep{sakaguchi2021winogrande}, ARC-e, ARC-c \citep{clark2018think}, and OBQA \citep{mihaylov2018can}; 
3)~\textbf{Arithmetic Reasoning}, including AQuA \citep{ling2017program}, GSM8K \citep{cobbe2021training}, MAWPS \citep{koncel2016mawps}, and SVAMP \citep{patel2021nlp}; 
4)~\textbf{Instruction Following} with Ultrafeedback~\citep{cui2023ultrafeedback} as training data and evaluation on Alpaca-Eval v1.0~\citep{li2023alpacaeval}; 
5)~\textbf{Natural Language Understanding} consists of eight datasets from the GLUE benchmark~\cite{wang2018glue}. More details about datasets, metrics, and hyper-parameters we use can be found in the Appendix.

\paragraph{Baselines.} 
Our baselines are constructed on a task basis.
In specific, on each task the proposed {\name} is compared with representative baselines from corresponding domain as listed below.
\begin{itemize}
\item 
On Knowledge Editing, we follow \citet{zhang2024comprehensive} and choose AdaLoRA \citep{zhang2023adaptive}, ROME and FT-L \citep{meng2022locating}, 
and MEMIT \citep{meng2022mass} as our baselines as they, same as us, do not require hard-to-access data or training additional models. 
In specific, AdaLoRA keeps unimportant weights in an LLM unchanged and achieves a highly efficient and precise PEFT.
ROME applies a causal-tracing to identify the layer wherein the knowledge is stored and then learns a rank-one update. 
FT-L, on the other hand, directly finetunes the layer identified by ROME. 
Recently, MEMIT extends ROME to a large-scale setting, where the edits can be made more efficiently. 

\item
On the other four tasks, we follow the setup from existing works \citep{hu2023llm,liu2024dora,wu2024reft} that evaluated a variety of representative PEFT methods including prefix tuning \citep{li2021prefix}, adapters \citep{houlsby2019parameter}, LoRA and its recent variants \citep{hu2021lora,zhang2023adaptive}, and ReFT \cite{wu2024reft}. Due to page limitation we refer the readers to \citet{hu2023llm,wu2024reft} and reference therein for more details. 
\end{itemize}

\subsection{Performance Comparison}
\paragraph{Knowledge Editing}

When performing knowledge editing, we introduce an additional norm constraint for low-rank matrices, as detailed in the Appendix. The results of knowledge editing are presented in Table~\ref{tab:know_edit}, addressing RQ1. From this table, we observe that the proposed {\name} outperforms all state-of-the-art baselines in terms of average performance, achieving the highest edit success rate while preserving the most knowledge that should not be updated. Moreover, {\name} demonstrates excellent generalization ability, as indicated by its high portability score which is a metric to measure if the edited
model can reason correctly about the updated knowledge.

\begin{table*}[h!]
\centering
\caption{Performance comparison of LLaMA-7b-chat against existing knowledge editing methods on four knowledge editing datasets. Results marked with "$\heartsuit$" are taken from \citet{zhang2024comprehensive}. "AVG" means the average of edit success, locality, portability, and fluency. Because fluency is not at the same magnitude as other metrics, we leverage "fluency/10" when computing AVG values.}
\label{tab:know_edit}
\begin{tabular}{@{}llcccccc@{}}
\toprule
Dataset                                & \multicolumn{1}{c}{Metric} & FT-L$^{\heartsuit}$  & AdaLoRA$^{\heartsuit}$ & ROME$^{\heartsuit}$ & MEMIT$^{\heartsuit}$ & \cellcolor{gray!20} \name \\ \midrule
\multirow{4}{*}{WikiData$_\text{recent}$}   & Edit Succ.$(\uparrow)$                 & 71.2   & 65.6    & 85.1 & 85.3  & \cellcolor{gray!20} \textbf{98.4}     \\
                                       & Locality$(\uparrow)$                   & 63.7  & 55.8    & 66.2 & 64.8  & \cellcolor{gray!20} \textbf{83.4}     \\
                                       & Portability$(\uparrow)$                & 48.7  & 47.2    & 37.5 & 37.9  & \cellcolor{gray!20} \textbf{54.3}     \\
                                       & Fluency$(\uparrow)$                    & 549    & 538     & 574  & 567   & \cellcolor{gray!20} \textbf{585}      \\
                                       &\cellcolor{gray!20} AVG$(\uparrow)$                         & \cellcolor{gray!20} 59.6  & \cellcolor{gray!20} 55.6    & \cellcolor{gray!20} 61.5  & \cellcolor{gray!20} 61.2  & \cellcolor{gray!20} \textbf{73.7}   \\
                                       \midrule
\multirow{4}{*}{WikiData$_\text{counterfact}$} & Edit Succ.$(\uparrow)$              & 51.1   & 72.1    & 83.2 & 83.4  & \cellcolor{gray!20} \textbf{99.4}     \\
                                       & Locality$(\uparrow)$                   & 62.5  & 66.8    & 65.4 & 63.7  & \cellcolor{gray!20} \textbf{90.9}     \\
                                       & Portability$(\uparrow)$                & 39.1  & 55.2    & 38.7 & 40.1  & \cellcolor{gray!20} \cellcolor{gray!20} \textbf{57.2}     \\
                                       & Fluency$(\uparrow)$                    & 545    & 554     & 579  & 569   & \cellcolor{gray!20} \textbf{592}      \\
                                       & \cellcolor{gray!20} AVG$(\uparrow)$                        & \cellcolor{gray!20} 51.8   & \cellcolor{gray!20} 62.4    & \cellcolor{gray!20} 61.3 & \cellcolor{gray!20} 61.0  & \cellcolor{gray!20} \textbf{76.7}    \\
                                       \midrule
\multirow{4}{*}{ZsRE}                  & Edit Succ.$(\uparrow)$                 & 51.1   & 72.1    & 83.2 & 83.4  & \cellcolor{gray!20} \textbf{100}      \\
                                       & Locality$(\uparrow)$                   & 62.5  & 66.8    & 65.4 & 63.7  & \cellcolor{gray!20} \textbf{92.5}     \\
                                       & Portability$(\uparrow)$                & 39.1  & \textbf{55.2}    & 38.7 & 40.1  & \cellcolor{gray!20} 50.9     \\
                                       & Fluency$(\uparrow)$                    & 545    & 554     & \textbf{579}  & 569   & \cellcolor{gray!20} 574      \\ 
                                       & \cellcolor{gray!20} AVG$(\uparrow)$                        & \cellcolor{gray!20} 54.6   & \cellcolor{gray!20} 62.1    & \cellcolor{gray!20} 58.2         & \cellcolor{gray!20} 54.0 &\cellcolor{gray!20} \textbf{75.2} \\
                                       \midrule
\multirow{3}{*}{WikiBio}               & Edit Succ.$(\uparrow)$                 & 66.3   & 97.0    & 95.1 & 94.3  & \cellcolor{gray!20} \textbf{99.5}     \\
                                       & Locality$(\uparrow)$                   & 60.1  & 57.9    & 47.0 & 51.6  & \cellcolor{gray!20} \textbf{92.5}     \\
                                       & Fluency$(\uparrow)$                    & 604   & 616     & 617  & 617   & \cellcolor{gray!20} \textbf{620}      \\ 
                                       & \cellcolor{gray!20} AVG$(\uparrow)$                        & \cellcolor{gray!20} 62.3  & \cellcolor{gray!20} 72.2    & \cellcolor{gray!20} 67.9 & \cellcolor{gray!20} 69.2  & \cellcolor{gray!20} \textbf{84.6}     \\
                                       \bottomrule
\end{tabular}%
\end{table*}

\paragraph{Commonsense Reasoning}
In this section, we present experiments on eight commonsense reasoning datasets to address RQ2, as shown in Table~\ref{tab:commonsense}. The table indicates that the proposed {\name} again outperforms all state-of-the-art parameter-efficient fine-tuning methods on average. Among the eight datasets, {\name} ranks the first in five cases. Remarkably, its parameter numbers are the same as that of LoReFT, significantly smaller than PrefT, Adapter, LoRA, and DoRA. Yet, {\name} still achieves higher accuracy on the commonsense reasoning datasets. This clearly demonstrates {\name}'s effectiveness of fine-tuning the most crucial parameters of LLaMA for commonsense reasoning tasks.

\begin{table*}[h!]

\centering
\caption{Accuracy comparison of LLaMA-7B against PEFT baselines on eight commonsense reasoning datasets. Results marked with "$\heartsuit$" are taken from \citet{liu2024dora}. "AVG" means the average accuracy of all datasets. For {\name}, Params (\%) is calculated by dividing the number of final low-rank matrices parameters by the number of parameters of the base LMs~(number of low-rank matrix parameters times sparsity).}
\label{tab:commonsense}
\resizebox{0.955\textwidth}{!}{%
\begin{tabular}{@{}cccccccccc>{\columncolor{gray!20}}c@{}}
\toprule
\multirow{2}{*}{PEFT} & \multirow{2}{*}{Params (\%)} & \multicolumn{9}{c}{Accuracy $(\uparrow)$}                                         \\ \cmidrule(l){3-11} 
                      &                              & BoolQ & PIQA & SIQA & HellaS. & WinoG. & ARC-e & ARC-c & OBQA &  \textbf{AVG}  \\ \midrule 
PrefT$^{\heartsuit}$                 & 0.11\%                      & 64.3  & 76.8 & 73.9 & 42.1    & 72.1   & 72.9  & 54.0  & 60.6 & 64.6 \\
Adapter$^{\text{S}}$$^{\heartsuit}$               & 0.99\%                      & 63.0  & 79.2 & 76.3 & 67.9    & 75.7   & 74.5  & 57.1  & 72.4 & \cellcolor{gray!20} 70.8 \\
Adapter$^{\text{P}}$$^{\heartsuit}$                  & 3.54\%                      & 67.9  & 76.4 & 78.8 & 69.8    & 78.9   & 73.7  & 57.3  & 75.2 & \cellcolor{gray!20} 72.3 \\
LoRA$^{\heartsuit}$                  & 0.83\%                      & 68.9  & 80.7 & 77.4 & 78.1    & 78.8   & 77.8  & 61.3  & 74.8 &\cellcolor{gray!20} 74.7 \\
DoRA (half)$^{\heartsuit}$           & 0.43\%                      & 70.0  & 82.6 & 79.7 & 83.2    & 80.6   & 80.6  & 65.4  & 77.6 &\cellcolor{gray!20} 77.5 \\
DoRA$^{\heartsuit}$                  & 0.84\%                      & 68.5  & 82.9 & 79.6 & 84.8    & 80.8   & 81.4  & 65.8  & 81.0 &\cellcolor{gray!20} 78.1 \\
LoReFT$^{\heartsuit}$                & 0.03\%                      & 69.3  & 84.4 & \textbf{80.3} & \textbf{93.1}    & \textbf{84.2}   & 83.2  & 68.2  & 78.9 &\cellcolor{gray!20} 80.2 \\ \midrule
\cellcolor{gray!20} \name              & \cellcolor{gray!20} 0.03\%                      & \cellcolor{gray!20} \textbf{71.0}  & \cellcolor{gray!20} \textbf{84.9} & \cellcolor{gray!20} 75.5 & \cellcolor{gray!20} 92.6    & \cellcolor{gray!20} 82.6   & \cellcolor{gray!20} \textbf{84.6}  & \cellcolor{gray!20} \textbf{70.0}  & \cellcolor{gray!20} \textbf{84.2} &\cellcolor{gray!20} \textbf{80.7} \\ \bottomrule
\end{tabular}%
}
\end{table*}
\paragraph{Arithmetic Reasoning}

In this section, we present experiments on four arithmetic reasoning datasets to address RQ2, with results shown in Table~\ref{tab:math}. The table indicates that LoRA achieves the highest average accuracy across the four datasets. However, the proposed {\name} performs comparably, retaining 97\% of LoRA's accuracy while updating only 22 times less parameters compared with LoRA. Additionally, compared to LoReFT, {\name} updates a similar number of parameters while achieving approximately a 6.3\% performance improvement.

\begin{table*}[h!]
\centering
\caption{Accuracy comparison of LLaMA-7B against PEFT baselines on four arithmetic reasoning datasets. Results marked with "$\heartsuit$" are taken from \citet{hu2023llm}. "AVG" means the average accuracy of all datasets.}
\label{tab:math}
\begin{tabular}{@{}ccccccc@{}}
\toprule
\multirow{2}{*}{PEFT} & \multirow{2}{*}{Params (\%)} & \multicolumn{5}{c}{Accuracy $(\uparrow)$}                                                  \\ \cmidrule(l){3-7} 
                      &                              & AQuA          & GSM8K         & MAWPS         & SVAMP         & \cellcolor{gray!20} AVG           \\ \midrule
PrefT$^{\heartsuit}$                 & 0.11\%                      & 14.2          & 24.4          & 63.4          & 38.1          & \cellcolor{gray!20}35.0          \\
Adapter$^{\text{S}}$$^{\heartsuit}$               & 0.99\%                      & 15.0          & 33.3          & 77.7          & \textbf{52.3} & \cellcolor{gray!20}44.6          \\
Adapter$^{\text{P}}$$^{\heartsuit}$               & 3.54\%                      & 18.1          & 35.3          & \textbf{82.4} & 49.6          & \cellcolor{gray!20}46.4          \\
LoRA$^{\heartsuit}$                  & 0.83\%                      & 18.9          & \textbf{37.5} & 79.0          & 52.1          & \cellcolor{gray!20}\textbf{46.9} \\
LoReFT$^{\heartsuit}$                & 0.03\%                      & 21.4          & 26.0          & 76.2          & 46.8          & \cellcolor{gray!20}42.6          \\ \midrule
\rowcolor{gray!20} 
\name              & 0.03\%                      & \textbf{26.0} & 33.0          & 79.8          & 44.7          & 45.9          \\ \bottomrule
\end{tabular}%
\end{table*}

\paragraph{Instruction Following}
In this section, we compare the proposed {\name} with state-of-the-art baselines on the instruction-following task. To ensure fair comparisons, we use the same prompt templates from \citet{taori2023alpaca}. The model performance is shown in Table~\ref{tab:instruction}. Based on the table, it can be observed that the proposed {\name} outperforms all baseline methods while updating the fewest parameters. Additionally, for the instruction-following task, we find that significantly fewer parameters need to be updated compared to commonsense reasoning and arithmetic reasoning tasks. This suggests that fewer parameters are related to the instruction-following ability in the large language model.
\begin{table}[h!]
\centering
\caption{Performance comparison of LLaMA-2 7B on instruction tuning task on Alpaca-Eval v1.0. We compute the win-rate against text-davinci-003 using GPT-4 as the annotator. 
Results marked with "$\heartsuit$" are taken from \citet{wu2024reft}.}
\label{tab:instruction}
\resizebox{\columnwidth}{!}{%
\begin{tabular}{@{}lrc@{}}
\toprule
Model \& PEFT          & Params (\%) & Win-rate $(\uparrow)$ \\ \midrule
GPT-3.5 Turbo 1106$^{\heartsuit}$  & -           & 86.30    \\ \midrule
Llama-2 Chat 13B$^{\heartsuit}$    & -           & 81.10    \\
Llama-2 Chat 7B$^{\heartsuit}$     & -           & 71.40    \\
Llama-2 7B \& FT$^{\heartsuit}$       & 100\%       & 80.93    \\
Llama-2 7B \& LoRA$^{\heartsuit}$     & 0.1245\%    & 81.48    \\
Llama-2 7B \& RED$^{\heartsuit}$      & 0.0039\%    & 81.69    \\
Llama-2 7B \& LoReFT$^{\heartsuit}$   & 0.0039\%    & 85.60    \\ \midrule
\cellcolor{gray!20} Llama-2 7B \& \name & \cellcolor{gray!20} 0.0037\%    & \cellcolor{gray!20} \textbf{85.77}    \\ \bottomrule
\end{tabular}%
}
\end{table}

\paragraph{Natural Language Understanding}
We conduct experiments on the GLUE to answer RQ2. We show the model performance in Table~\ref{tab:nlu}. According to the table, the proposed {\name} outperforms the state-of-the-art baselines significantly. The best baseline LoRA achieves 88.1 average accuracy but the proposed {\name} reaches about 89.0 accuracy on the eight datasets averagely. On RTE dataset, the proposed {\name} even achieves 3.4\% performance improvement. Compared to fully fine-tuning, the proposed {\name} also achieves better performance. The potential reason may be that {\name} only updates very few parameters and prevents overfitting on natural language understanding tasks. It demonstrates that the proposed {\name} not only can be applied to decoder-only models but also can be applied to encoder-only language models.
\begin{table*}[h!]
\centering
\caption{Accuracy comparison of RoBERTa-large against PEFT baselines on the GLUE benchmark. 
Results marked with "$\heartsuit$" are taken from \citet{wu2023eva}. 
"AVG" means the average accuracy of all datasets.}
\label{tab:nlu}
\resizebox{\textwidth}{!}{%
\begin{tabular}{@{}ccccccccccc@{}}
\toprule
PEFT        & Params (\%) & RTE           & MRPC          & QQP           & STS-b         & QNLI          & CoLA          & SST2          & MNLI          & \cellcolor{gray!20} AVG           \\ \midrule
FT$^{\heartsuit}$          & 100\%       & 85.8          & 91.7          & 91.5          & 92.6          & 93.8          & 68.2          & 96.0          & 88.8          & \cellcolor{gray!20} 88.6          \\ \midrule
Adapter$^{\heartsuit}$     & 0.254\%     & 85.3          & \textbf{90.5} & \textbf{91.4} & 91.5          & 94.6          & 65.4          & 95.2          & 90.1          & \cellcolor{gray!20} 88.0          \\
LoRA$^{\heartsuit}$        & 0.225\%     & 86.3          & 89.8          & 90.7          & 91.7          & \textbf{94.7} & 65.5          & 96.0          & 90.2          & \cellcolor{gray!20} 88.1          \\
Adapter$^\text{FNN}$$^{\heartsuit}$ & 0.225\%     & 84.8          & 90.5          & 91.3          & 90.2          & 94.3          & 64.4          & 96.1          & 90.3          & \cellcolor{gray!20} 87.7          \\
RED$^{\heartsuit}$         & 0.014\%     & 86.2          & 90.3          & 88.8          & 91.3          & 93.5          & 68.1          & 96.0          & 89.5          & \cellcolor{gray!20} 88.0          \\
LoReFT$^{\heartsuit}$      & 0.014\%     & 86.2          & 90.1          & 88.5          & 91.6          & 94.1          & 68.0          & \textbf{96.2} & 89.2          & \cellcolor{gray!20} 88.0          \\ \midrule
\rowcolor{gray!20}
\name  & 0.015\%     & \textbf{89.2} & 90.2          & 91.1          & \textbf{92.0} & \textbf{94.7} & \textbf{69.2} & 95.2          & \textbf{90.5} & \textbf{89.0} \\ \bottomrule
\end{tabular}%
}
\end{table*}

\subsection{Forgetting Test}
In this section, we study if a fine-tuned model forgets knowledge learned from the pre-training stage to answer RQ3. To make fair comparisons, we evaluate LoRA and {\name} after fine-tuning on Commonsense170K, Ultrafeedback, and Math10K in a zero-shot setting and using the same prompt templates. We report the experiment results in Table~\ref{tab:forget}. According to the table, we can find that compared to LoRA, the {\name} forgets less knowledge after fine-tuning. For example, after fine-tuning on the Commonsense170K dataset, LoRA leads to a significant performance drop on TriviaQA and MMLU. However, the proposed {\name} still preserves over 90\% performance of LLaMA-2. Besides, we can also find that both LoRA and {\name} achieve good performance on ARC-c dataset. It may indicate that fine-tuning large language models on Commonsense170K, Ultrafeedback, or Math10K may not make them forget much general knowledge.

\begin{table*}[h!]
    \centering
    \caption{Accuracy of fine-tuned models on TriviaQA~(knowledge reasoning), MMLU~(general knowledge), and ARC-c~(commonsense reasoning) dataset. "AVG" is the average accuracy of Humanities, Social Sciences, STEM, and Other fields on MMLU. The evaluation is conducted with Lm-Evaluation-Harness \citep{gao20232eval}.}
    \label{tab:forget}
    \resizebox{\textwidth}{!}{%
    \begin{tabular}{@{}lccccccc@{}}
    \toprule
    \multirow{2}{*}{}     & \multirow{2}{*}{TriviaQA} & \multicolumn{5}{c}{MMLU}                                                                                                                              & \multirow{2}{*}{ARC-c} \\ \cmidrule(lr){3-7}
                          &                           & \multicolumn{1}{c}{Humanities} & \multicolumn{1}{c}{Social Sciences} & \multicolumn{1}{c}{STEM} & \multicolumn{1}{c}{Other} & \multicolumn{1}{c}{AVG} &                        \\ \midrule
    LLaMA 7B & 48.6 & 29.9 & 29.4 & 26.3 & 33.4 & 29.8 &  41.7\\ \hdashline 
    After Commonsense170K &                           &                                &                                     &                          &                           &                         &                        \\
    LoRA                  & 9.0                       & 24.4                           & 21.9                                & 21.5                     & 24.0                      & 23.1                    & -                   \\
    \cellcolor{gray!20} {\name}              & \cellcolor{gray!20} \textbf{47.8}                      & \cellcolor{gray!20} \textbf{36.8}                           & \cellcolor{gray!20} \textbf{42.7}                                & \cellcolor{gray!20} \textbf{31.4}                     & \cellcolor{gray!20} \textbf{42.3}                      & \cellcolor{gray!20} \textbf{38.1}                   & \cellcolor{gray!20} -                   \\ \hdashline
    After Math10K         &                           &                                &                                     &                          &                           &                         &                        \\
    LoRA                  & 30.5                      & 31.1                           & 34.4                                & 30.5                     & 35.7                      & 32.7                    & \textbf{42.2}                   \\
    \cellcolor{gray!20} {\name}              & \cellcolor{gray!20} \textbf{51.3}                      & \cellcolor{gray!20} \textbf{37.9}                           & \cellcolor{gray!20} \textbf{43.0}                                & \cellcolor{gray!20} \textbf{32.1}                     & \cellcolor{gray!20} \textbf{43.9 }                     & \cellcolor{gray!20} \textbf{39.0}                    & \cellcolor{gray!20} 41.9                   \\ \midrule 
    LLaMA-2 7B            & 52.5                      & 38.9                           & 46.1                                & 34.3                     & 47.1                      & 41.2                    & 43.4                   \\ \hdashline
    After Ultrafeedback   &                           &                                &                                     &                          &                           &                         &                        \\
    LoRA                  & 23.5                      & 41.3                           & 49.4                                & 43.0                     & 49.3                      & 43.0                    & 41.2                   \\
    \cellcolor{gray!20} {\name}              & \cellcolor{gray!20} \textbf{30.1 }                     & \cellcolor{gray!20} \textbf{42.1}                           & \cellcolor{gray!20} \textbf{51.5}                                & \cellcolor{gray!20} \textbf{44.9}                     & \cellcolor{gray!20} \textbf{52.0}                      & \cellcolor{gray!20} \textbf{44.9 }                   & \cellcolor{gray!20} \textbf{44.4}                   \\ \bottomrule

    \end{tabular}%
    }
    \end{table*}

\subsection{Sensitivity w.r.t. Training Data Size}
In this section, we study how the model performance changes with different amounts of training data. We show the experiment results in Fig.~\ref{fig:numdata}. Based on the figure, we can find that with the decreasing amounts of training data, the performance gap between LoRA and {\name} is becoming smaller. When using only 12.5\% Math10K data as the training data to fine-tune the LLaMA 7B, {\name} even outperforms LoRA on GSM8K. In conclusion, the proposed {\name} shows more superiority on small data scenarios.
\begin{figure}[htb!] 
\centering        
\includegraphics[width=1.0\columnwidth]{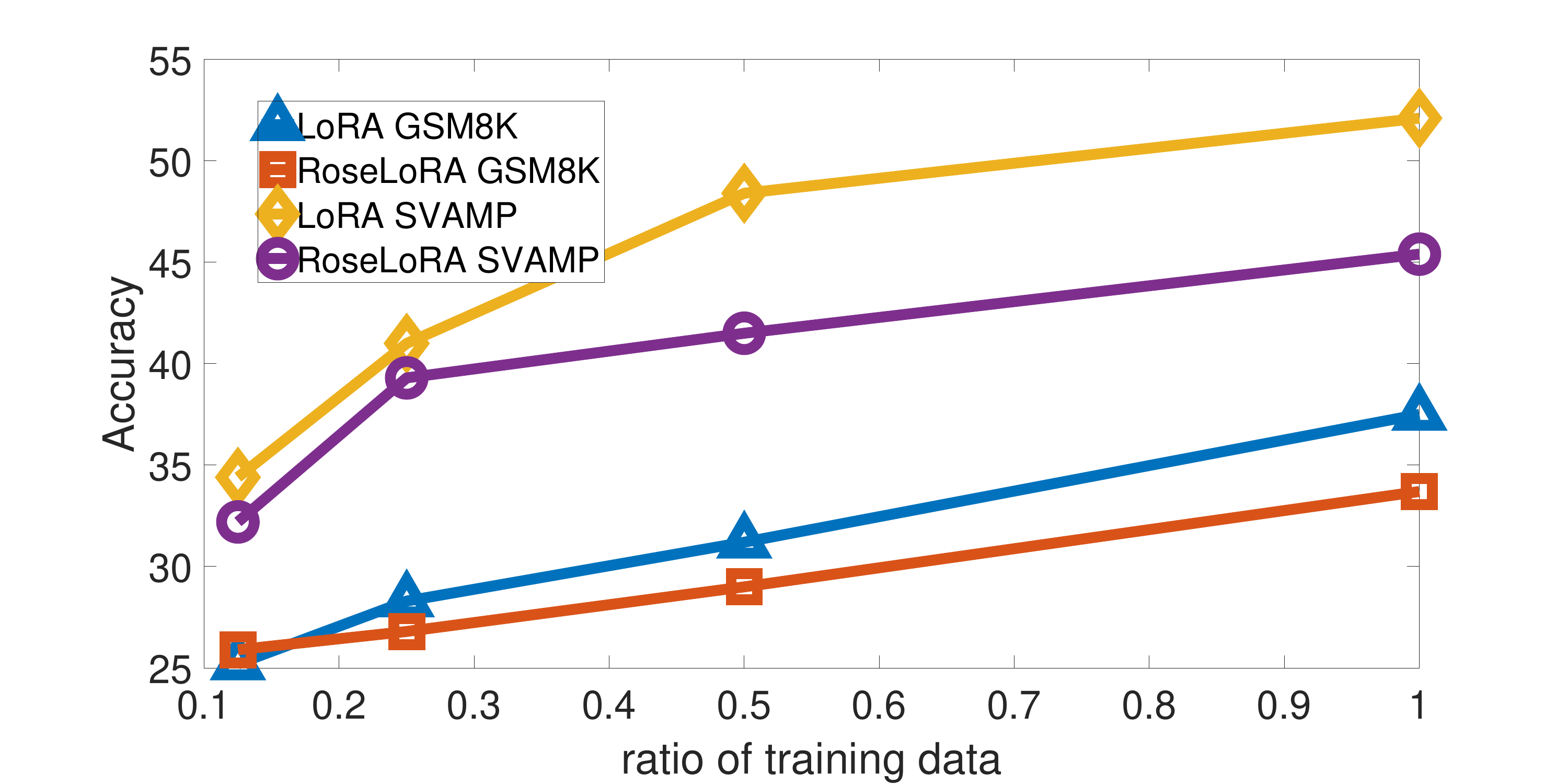} 
\caption{Accuracy of LoRA and {\name} with different amount of Math10K training data on GSM8K and SVAMP.} 
\label{fig:numdata}
\vspace{-0.1in}
\end{figure}

\section{Conclusion}
In this paper, we address the limitations of existing parameter-efficient fine-tuning (PEFT) methods, particularly the LoRA family, in handling tasks requiring selective knowledge updates while still being effective for other general NLP tasks. We introduced a novel method, \textbf{r}ow and c\textbf{o}lumn-wise spar\textbf{se} \textbf{lo}w-\textbf{r}ank \textbf{a}daptation ({\name}), which selectively updates the most important parameters for specific tasks, maintaining efficiency while minimizing unnecessary changes to the pre-trained model's knowledge. {\name} applies a row and column-wise sparsity constraint to the product of low-rank matrices, ensuring efficient updates without modifying the model architecture. Our theoretical analysis lower bounds the sparsity of product matrices that affect model's knowledge, and our sensitivity-based importance scoring effectively fulfilled the sparsity constraints. Through extensive experiments on five benchmarks encompassing over twenty datasets, {\name} demonstrated superior performance on both general-purposed fine-tuning and knowledge editing tasks compared to existing methods. This highlights its potential as a robust and efficient fine-tuning solution for a wide range of NLP applications.

\section*{Limitations}
The proposed {\name} framework introduces a hyper-parameter $\beta$ to smooth the sensitivity estimation, which might require additional effort to tune. Fortunately, we observe that the model performance is not sensitive to the hyper-parameter and we set it to a fixed value to achieve good performance in this paper.

\section*{Acknowledgement}
This work is supported in part by the US National Science Foundation under grant NSF IIS-1747614 and NSF IIS-2141037. Any opinions, findings, and conclusions or recommendations expressed in this material are those of the author(s) and do not
necessarily reflect the views of the National Science Foundation.

\bibliography{ref}

\begin{thebibliography}{60}
\providecommand{\natexlab}[1]{#1}

\bibitem[{Bisk et~al.(2020)Bisk, Zellers, Gao, Choi et~al.}]{bisk2020piqa}
Yonatan Bisk, Rowan Zellers, Jianfeng Gao, Yejin Choi, et~al. 2020.
\newblock Piqa: Reasoning about physical commonsense in natural language.
\newblock In \emph{Proceedings of the AAAI conference on artificial intelligence}, volume~34, pages 7432--7439.

\bibitem[{Clark et~al.(2019)Clark, Lee, Chang, Kwiatkowski, Collins, and Toutanova}]{clark2019boolq}
Christopher Clark, Kenton Lee, Ming-Wei Chang, Tom Kwiatkowski, Michael Collins, and Kristina Toutanova. 2019.
\newblock Boolq: Exploring the surprising difficulty of natural yes/no questions.
\newblock \emph{arXiv preprint arXiv:1905.10044}.

\bibitem[{Clark et~al.(2018)Clark, Cowhey, Etzioni, Khot, Sabharwal, Schoenick, and Tafjord}]{clark2018think}
Peter Clark, Isaac Cowhey, Oren Etzioni, Tushar Khot, Ashish Sabharwal, Carissa Schoenick, and Oyvind Tafjord. 2018.
\newblock Think you have solved question answering? try arc, the ai2 reasoning challenge.
\newblock \emph{arXiv preprint arXiv:1803.05457}.

\bibitem[{Cobbe et~al.(2021)Cobbe, Kosaraju, Bavarian, Chen, Jun, Kaiser, Plappert, Tworek, Hilton, Nakano et~al.}]{cobbe2021training}
Karl Cobbe, Vineet Kosaraju, Mohammad Bavarian, Mark Chen, Heewoo Jun, Lukasz Kaiser, Matthias Plappert, Jerry Tworek, Jacob Hilton, Reiichiro Nakano, et~al. 2021.
\newblock Training verifiers to solve math word problems.
\newblock \emph{arXiv preprint arXiv:2110.14168}.

\bibitem[{Cohen et~al.(2024)Cohen, Biran, Yoran, Globerson, and Geva}]{cohen2024evaluating}
Roi Cohen, Eden Biran, Ori Yoran, Amir Globerson, and Mor Geva. 2024.
\newblock Evaluating the ripple effects of knowledge editing in language models.
\newblock \emph{Transactions of the Association for Computational Linguistics}, 12:283--298.

\bibitem[{Cui et~al.(2023)Cui, Yuan, Ding, Yao, Zhu, Ni, Xie, Liu, and Sun}]{cui2023ultrafeedback}
Ganqu Cui, Lifan Yuan, Ning Ding, Guanming Yao, Wei Zhu, Yuan Ni, Guotong Xie, Zhiyuan Liu, and Maosong Sun. 2023.
\newblock Ultrafeedback: Boosting language models with high-quality feedback.
\newblock \emph{arXiv preprint arXiv:2310.01377}.

\bibitem[{Dai et~al.(2021)Dai, Dong, Hao, Sui, Chang, and Wei}]{dai2021knowledge}
Damai Dai, Li~Dong, Yaru Hao, Zhifang Sui, Baobao Chang, and Furu Wei. 2021.
\newblock Knowledge neurons in pretrained transformers.
\newblock \emph{arXiv preprint arXiv:2104.08696}.

\bibitem[{De~Cao et~al.(2021)De~Cao, Aziz, and Titov}]{de2021editing}
Nicola De~Cao, Wilker Aziz, and Ivan Titov. 2021.
\newblock Editing factual knowledge in language models.
\newblock \emph{arXiv preprint arXiv:2104.08164}.

\bibitem[{Dettmers et~al.(2024)Dettmers, Pagnoni, Holtzman, and Zettlemoyer}]{dettmers2024qlora}
Tim Dettmers, Artidoro Pagnoni, Ari Holtzman, and Luke Zettlemoyer. 2024.
\newblock Qlora: Efficient finetuning of quantized llms.
\newblock \emph{Advances in Neural Information Processing Systems}, 36.

\bibitem[{Ding et~al.(2023{\natexlab{a}})Ding, Lv, Wang, Chen, Zhou, Liu, and Sun}]{ding2023sparse}
Ning Ding, Xingtai Lv, Qiaosen Wang, Yulin Chen, Bowen Zhou, Zhiyuan Liu, and Maosong Sun. 2023{\natexlab{a}}.
\newblock Sparse low-rank adaptation of pre-trained language models.
\newblock In \emph{Proceedings of the 2023 Conference on Empirical Methods in Natural Language Processing}, pages 4133--4145.

\bibitem[{Ding et~al.(2023{\natexlab{b}})Ding, Qin, Yang, Wei, Yang, Su, Hu, Chen, Chan, Chen et~al.}]{ding2023parameter}
Ning Ding, Yujia Qin, Guang Yang, Fuchao Wei, Zonghan Yang, Yusheng Su, Shengding Hu, Yulin Chen, Chi-Min Chan, Weize Chen, et~al. 2023{\natexlab{b}}.
\newblock Parameter-efficient fine-tuning of large-scale pre-trained language models.
\newblock \emph{Nature Machine Intelligence}, 5(3):220--235.

\bibitem[{Dong et~al.(2022)Dong, Dai, Song, Xu, Sui, and Li}]{dong2022calibrating}
Qingxiu Dong, Damai Dai, Yifan Song, Jingjing Xu, Zhifang Sui, and Lei Li. 2022.
\newblock Calibrating factual knowledge in pretrained language models.
\newblock \emph{arXiv preprint arXiv:2210.03329}.

\bibitem[{Engels et~al.(2024)Engels, Liao, Michaud, Gurnee, and Tegmark}]{engels2024not}
Joshua Engels, Isaac Liao, Eric~J Michaud, Wes Gurnee, and Max Tegmark. 2024.
\newblock Not all language model features are linear.
\newblock \emph{arXiv preprint arXiv:2405.14860}.

\bibitem[{Gao et~al.(2023)Gao, Tow, Abbasi, Biderman, Black, DiPofi, Foster, Golding, Hsu, Le~Noac'h, Li, McDonell, Muennighoff, Ociepa, Phang, Reynolds, Schoelkopf, Skowron, Sutawika, Tang, Thite, Wang, Wang, and Zou}]{gao20232eval}
Leo Gao, Jonathan Tow, Baber Abbasi, Stella Biderman, Sid Black, Anthony DiPofi, Charles Foster, Laurence Golding, Jeffrey Hsu, Alain Le~Noac'h, Haonan Li, Kyle McDonell, Niklas Muennighoff, Chris Ociepa, Jason Phang, Laria Reynolds, Hailey Schoelkopf, Aviya Skowron, Lintang Sutawika, Eric Tang, Anish Thite, Ben Wang, Kevin Wang, and Andy Zou. 2023.
\newblock \href {https://doi.org/10.5281/zenodo.10256836} {A framework for few-shot language model evaluation}.

\bibitem[{Han et~al.(2015)Han, Pool, Tran, and Dally}]{han2015learning}
Song Han, Jeff Pool, John Tran, and William Dally. 2015.
\newblock Learning both weights and connections for efficient neural network.
\newblock \emph{Advances in neural information processing systems}, 28.

\bibitem[{Han et~al.(2024)Han, Gao, Liu, Zhang et~al.}]{han2024parameter}
Zeyu Han, Chao Gao, Jinyang Liu, Sai~Qian Zhang, et~al. 2024.
\newblock Parameter-efficient fine-tuning for large models: A comprehensive survey.
\newblock \emph{arXiv preprint arXiv:2403.14608}.

\bibitem[{Hartvigsen et~al.(2024)Hartvigsen, Sankaranarayanan, Palangi, Kim, and Ghassemi}]{hartvigsen2024aging}
Tom Hartvigsen, Swami Sankaranarayanan, Hamid Palangi, Yoon Kim, and Marzyeh Ghassemi. 2024.
\newblock Aging with grace: Lifelong model editing with discrete key-value adaptors.
\newblock \emph{Advances in Neural Information Processing Systems}, 36.

\bibitem[{Hase et~al.(2024)Hase, Bansal, Kim, and Ghandeharioun}]{hase2024does}
Peter Hase, Mohit Bansal, Been Kim, and Asma Ghandeharioun. 2024.
\newblock Does localization inform editing? surprising differences in causality-based localization vs. knowledge editing in language models.
\newblock \emph{Advances in Neural Information Processing Systems}, 36.

\bibitem[{He et~al.(2021)He, Liu, Ye, Tan, Ding, Cheng, Low, Bing, and Si}]{he2021effectiveness}
Ruidan He, Linlin Liu, Hai Ye, Qingyu Tan, Bosheng Ding, Liying Cheng, Jia-Wei Low, Lidong Bing, and Luo Si. 2021.
\newblock On the effectiveness of adapter-based tuning for pretrained language model adaptation.
\newblock \emph{arXiv preprint arXiv:2106.03164}.

\bibitem[{Houlsby et~al.(2019)Houlsby, Giurgiu, Jastrzebski, Morrone, De~Laroussilhe, Gesmundo, Attariyan, and Gelly}]{houlsby2019parameter}
Neil Houlsby, Andrei Giurgiu, Stanislaw Jastrzebski, Bruna Morrone, Quentin De~Laroussilhe, Andrea Gesmundo, Mona Attariyan, and Sylvain Gelly. 2019.
\newblock Parameter-efficient transfer learning for nlp.
\newblock In \emph{International conference on machine learning}, pages 2790--2799.

\bibitem[{Hu et~al.(2021)Hu, Shen, Wallis, Allen-Zhu, Li, Wang, Wang, and Chen}]{hu2021lora}
Edward~J. Hu, Yelong Shen, Phillip Wallis, Zeyuan Allen-Zhu, Yuanzhi Li, Shean Wang, Lu~Wang, and Weizhu Chen. 2021.
\newblock \href {https://arxiv.org/abs/2106.09685} {Lora: Low-rank adaptation of large language models}.
\newblock \emph{Preprint}, arXiv:2106.09685.

\bibitem[{Hu et~al.(2023)Hu, Wang, Lan, Xu, Lim, Bing, Xu, Poria, and Lee}]{hu2023llm}
Zhiqiang Hu, Lei Wang, Yihuai Lan, Wanyu Xu, Ee-Peng Lim, Lidong Bing, Xing Xu, Soujanya Poria, and Roy Ka-Wei Lee. 2023.
\newblock Llm-adapters: An adapter family for parameter-efficient fine-tuning of large language models.
\newblock \emph{arXiv preprint arXiv:2304.01933}.

\bibitem[{Huang et~al.(2023)Huang, Shen, Zhang, Zhou, Rong, and Xiong}]{huang2023transformer}
Zeyu Huang, Yikang Shen, Xiaofeng Zhang, Jie Zhou, Wenge Rong, and Zhang Xiong. 2023.
\newblock Transformer-patcher: One mistake worth one neuron.
\newblock \emph{arXiv preprint arXiv:2301.09785}.

\bibitem[{Kenton and Toutanova(2019)}]{kenton2019bert}
Jacob Devlin Ming-Wei~Chang Kenton and Lee~Kristina Toutanova. 2019.
\newblock Bert: Pre-training of deep bidirectional transformers for language understanding.
\newblock In \emph{Proceedings of NAACL-HLT}, pages 4171--4186.

\bibitem[{Koncel-Kedziorski et~al.(2016)Koncel-Kedziorski, Roy, Amini, Kushman, and Hajishirzi}]{koncel2016mawps}
Rik Koncel-Kedziorski, Subhro Roy, Aida Amini, Nate Kushman, and Hannaneh Hajishirzi. 2016.
\newblock Mawps: A math word problem repository.
\newblock In \emph{Proceedings of the 2016 conference of the north american chapter of the association for computational linguistics: human language technologies}, pages 1152--1157.

\bibitem[{Li and Liang(2021)}]{li2021prefix}
Xiang~Lisa Li and Percy Liang. 2021.
\newblock Prefix-tuning: Optimizing continuous prompts for generation.
\newblock \emph{arXiv preprint arXiv:2101.00190}.

\bibitem[{Li et~al.(2023{\natexlab{a}})Li, Zhang, Dubois, Taori, Gulrajani, Guestrin, Liang, and Hashimoto}]{li2023alpacaeval}
Xuechen Li, Tianyi Zhang, Yann Dubois, Rohan Taori, Ishaan Gulrajani, Carlos Guestrin, Percy Liang, and Tatsunori~B. Hashimoto. 2023{\natexlab{a}}.
\newblock Alpacaeval: An automatic evaluator of instruction-following models.
\newblock \url{https://github.com/tatsu-lab/alpaca_eval}.

\bibitem[{Li et~al.(2023{\natexlab{b}})Li, Yu, Liang, He, Karampatziakis, Chen, and Zhao}]{li2023loftq}
Yixiao Li, Yifan Yu, Chen Liang, Pengcheng He, Nikos Karampatziakis, Weizhu Chen, and Tuo Zhao. 2023{\natexlab{b}}.
\newblock Loftq: Lora-fine-tuning-aware quantization for large language models.
\newblock \emph{arXiv preprint arXiv:2310.08659}.

\bibitem[{Li et~al.(2023{\natexlab{c}})Li, Yu, Zhang, Liang, He, Chen, and Zhao}]{li2023losparse}
Yixiao Li, Yifan Yu, Qingru Zhang, Chen Liang, Pengcheng He, Weizhu Chen, and Tuo Zhao. 2023{\natexlab{c}}.
\newblock Losparse: Structured compression of large language models based on low-rank and sparse approximation.
\newblock In \emph{International Conference on Machine Learning}, pages 20336--20350. PMLR.

\bibitem[{Ling et~al.(2017)Ling, Yogatama, Dyer, and Blunsom}]{ling2017program}
Wang Ling, Dani Yogatama, Chris Dyer, and Phil Blunsom. 2017.
\newblock Program induction by rationale generation: Learning to solve and explain algebraic word problems.
\newblock \emph{arXiv preprint arXiv:1705.04146}.

\bibitem[{Liu et~al.(2024)Liu, Wang, Yin, Molchanov, Wang, Cheng, and Chen}]{liu2024dora}
Shih-Yang Liu, Chien-Yi Wang, Hongxu Yin, Pavlo Molchanov, Yu-Chiang~Frank Wang, Kwang-Ting Cheng, and Min-Hung Chen. 2024.
\newblock Dora: Weight-decomposed low-rank adaptation.
\newblock \emph{arXiv preprint arXiv:2402.09353}.

\bibitem[{Liu et~al.(2019)Liu, Ott, Goyal, Du, Joshi, Chen, Levy, Lewis, Zettlemoyer, and Stoyanov}]{liu2019roberta}
Yinhan Liu, Myle Ott, Naman Goyal, Jingfei Du, Mandar Joshi, Danqi Chen, Omer Levy, Mike Lewis, Luke Zettlemoyer, and Veselin Stoyanov. 2019.
\newblock Roberta: A robustly optimized bert pretraining approach.
\newblock \emph{arXiv preprint arXiv:1907.11692}.

\bibitem[{Meng et~al.(2022{\natexlab{a}})Meng, Bau, Andonian, and Belinkov}]{meng2022locating}
Kevin Meng, David Bau, Alex Andonian, and Yonatan Belinkov. 2022{\natexlab{a}}.
\newblock Locating and editing factual associations in gpt.
\newblock \emph{Advances in Neural Information Processing Systems}, 35:17359--17372.

\bibitem[{Meng et~al.(2022{\natexlab{b}})Meng, Sharma, Andonian, Belinkov, and Bau}]{meng2022mass}
Kevin Meng, Arnab~Sen Sharma, Alex Andonian, Yonatan Belinkov, and David Bau. 2022{\natexlab{b}}.
\newblock Mass-editing memory in a transformer.
\newblock \emph{arXiv preprint arXiv:2210.07229}.

\bibitem[{Mihaylov et~al.(2018)Mihaylov, Clark, Khot, and Sabharwal}]{mihaylov2018can}
Todor Mihaylov, Peter Clark, Tushar Khot, and Ashish Sabharwal. 2018.
\newblock Can a suit of armor conduct electricity? a new dataset for open book question answering.
\newblock \emph{arXiv preprint arXiv:1809.02789}.

\bibitem[{Mitchell et~al.(2021)Mitchell, Lin, Bosselut, Finn, and Manning}]{mitchell2021fast}
Eric Mitchell, Charles Lin, Antoine Bosselut, Chelsea Finn, and Christopher~D Manning. 2021.
\newblock Fast model editing at scale.
\newblock \emph{arXiv preprint arXiv:2110.11309}.

\bibitem[{Mitchell et~al.(2022)Mitchell, Lin, Bosselut, Manning, and Finn}]{mitchell2022memory}
Eric Mitchell, Charles Lin, Antoine Bosselut, Christopher~D Manning, and Chelsea Finn. 2022.
\newblock Memory-based model editing at scale.
\newblock In \emph{International Conference on Machine Learning}, pages 15817--15831.

\bibitem[{Molchanov et~al.(2019)Molchanov, Mallya, Tyree, Frosio, and Kautz}]{molchanov2019importance}
Pavlo Molchanov, Arun Mallya, Stephen Tyree, Iuri Frosio, and Jan Kautz. 2019.
\newblock Importance estimation for neural network pruning.
\newblock In \emph{Proceedings of the IEEE/CVF conference on computer vision and pattern recognition}, pages 11264--11272.

\bibitem[{Natarajan(1995)}]{natarajan1995sparse}
Balas~Kausik Natarajan. 1995.
\newblock Sparse approximate solutions to linear systems.
\newblock \emph{SIAM journal on computing}, 24(2):227--234.

\bibitem[{Ouyang et~al.(2022)Ouyang, Wu, Jiang, Almeida, Wainwright, Mishkin, Zhang, Agarwal, Slama, Ray et~al.}]{ouyang2022training}
Long Ouyang, Jeffrey Wu, Xu~Jiang, Diogo Almeida, Carroll Wainwright, Pamela Mishkin, Chong Zhang, Sandhini Agarwal, Katarina Slama, Alex Ray, et~al. 2022.
\newblock Training language models to follow instructions with human feedback.
\newblock \emph{Advances in neural information processing systems}, 35:27730--27744.

\bibitem[{Patel et~al.(2021)Patel, Bhattamishra, and Goyal}]{patel2021nlp}
Arkil Patel, Satwik Bhattamishra, and Navin Goyal. 2021.
\newblock Are nlp models really able to solve simple math word problems?
\newblock \emph{arXiv preprint arXiv:2103.07191}.

\bibitem[{Sakaguchi et~al.(2021)Sakaguchi, Bras, Bhagavatula, and Choi}]{sakaguchi2021winogrande}
Keisuke Sakaguchi, Ronan~Le Bras, Chandra Bhagavatula, and Yejin Choi. 2021.
\newblock Winogrande: An adversarial winograd schema challenge at scale.
\newblock \emph{Communications of the ACM}, 64(9):99--106.

\bibitem[{Sanh et~al.(2020)Sanh, Wolf, and Rush}]{sanh2020movement}
Victor Sanh, Thomas Wolf, and Alexander Rush. 2020.
\newblock Movement pruning: Adaptive sparsity by fine-tuning.
\newblock \emph{Advances in neural information processing systems}, 33:20378--20389.

\bibitem[{Sap et~al.(2019)Sap, Rashkin, Chen, LeBras, and Choi}]{sap2019socialiqa}
Maarten Sap, Hannah Rashkin, Derek Chen, Ronan LeBras, and Yejin Choi. 2019.
\newblock Socialiqa: Commonsense reasoning about social interactions.
\newblock \emph{arXiv preprint arXiv:1904.09728}.

\bibitem[{Sun et~al.(2023)Sun, Liu, Bair, and Kolter}]{sun2023simple}
Mingjie Sun, Zhuang Liu, Anna Bair, and J~Zico Kolter. 2023.
\newblock A simple and effective pruning approach for large language models.
\newblock \emph{arXiv preprint arXiv:2306.11695}.

\bibitem[{Taori et~al.(2023)Taori, Gulrajani, Zhang, Dubois, Li, Guestrin, Liang, and Hashimoto}]{taori2023alpaca}
Rohan Taori, Ishaan Gulrajani, Tianyi Zhang, Yann Dubois, Xuechen Li, Carlos Guestrin, Percy Liang, and Tatsunori~B. Hashimoto. 2023.
\newblock Stanford alpaca: An instruction-following llama model.
\newblock \url{https://github.com/tatsu-lab/stanford_alpaca}.

\bibitem[{Touvron et~al.(2023)Touvron, Martin, Stone, Albert, Almahairi, Babaei, Bashlykov, Batra, Bhargava, Bhosale et~al.}]{touvron2023llama}
Hugo Touvron, Louis Martin, Kevin Stone, Peter Albert, Amjad Almahairi, Yasmine Babaei, Nikolay Bashlykov, Soumya Batra, Prajjwal Bhargava, Shruti Bhosale, et~al. 2023.
\newblock Llama 2: Open foundation and fine-tuned chat models.
\newblock \emph{arXiv preprint arXiv:2307.09288}.

\bibitem[{Wang et~al.(2018)Wang, Singh, Michael, Hill, Levy, and Bowman}]{wang2018glue}
Alex Wang, Amanpreet Singh, Julian Michael, Felix Hill, Omer Levy, and Samuel~R Bowman. 2018.
\newblock Glue: A multi-task benchmark and analysis platform for natural language understanding.
\newblock \emph{arXiv preprint arXiv:1804.07461}.

\bibitem[{Wang et~al.(2023)Wang, Zhu, Liu, Zheng, Chen et~al.}]{wang2023knowledge}
Song Wang, Yaochen Zhu, Haochen Liu, Zaiyi Zheng, Chen Chen, et~al. 2023.
\newblock Knowledge editing for large language models: A survey.
\newblock \emph{arXiv preprint arXiv:2310.16218}.

\bibitem[{Wang et~al.(2019)Wang, Wohlwend, and Lei}]{wang2019structured}
Ziheng Wang, Jeremy Wohlwend, and Tao Lei. 2019.
\newblock Structured pruning of large language models.
\newblock \emph{arXiv preprint arXiv:1910.04732}.

\bibitem[{Wu et~al.(2023)Wu, Peng, Chen, Su, and Sun}]{wu2023eva}
Suhang Wu, Minlong Peng, Yue Chen, Jinsong Su, and Mingming Sun. 2023.
\newblock Eva-kellm: A new benchmark for evaluating knowledge editing of llms.
\newblock \emph{arXiv preprint arXiv:2308.09954}.

\bibitem[{Wu et~al.(2024)Wu, Arora, Wang, Geiger, Jurafsky, Manning, and Potts}]{wu2024reft}
Zhengxuan Wu, Aryaman Arora, Zheng Wang, Atticus Geiger, Dan Jurafsky, Christopher~D Manning, and Christopher Potts. 2024.
\newblock Reft: Representation finetuning for language models.
\newblock \emph{arXiv preprint arXiv:2404.03592}.

\bibitem[{Yao et~al.(2023)Yao, Wang, Tian, Cheng, Li, Deng, Chen, and Zhang}]{yao2023editing}
Yunzhi Yao, Peng Wang, Bozhong Tian, Siyuan Cheng, Zhoubo Li, Shumin Deng, Huajun Chen, and Ningyu Zhang. 2023.
\newblock Editing large language models: Problems, methods, and opportunities.
\newblock \emph{arXiv preprint arXiv:2305.13172}.

\bibitem[{Zellers et~al.(2019)Zellers, Holtzman, Bisk, Farhadi, and Choi}]{zellers2019hellaswag}
Rowan Zellers, Ari Holtzman, Yonatan Bisk, Ali Farhadi, and Yejin Choi. 2019.
\newblock Hellaswag: Can a machine really finish your sentence?
\newblock \emph{arXiv preprint arXiv:1905.07830}.

\bibitem[{Zhang et~al.(2024)Zhang, Yao, Tian, Wang, Deng, Wang, Xi, Mao, Zhang, Ni et~al.}]{zhang2024comprehensive}
Ningyu Zhang, Yunzhi Yao, Bozhong Tian, Peng Wang, Shumin Deng, Mengru Wang, Zekun Xi, Shengyu Mao, Jintian Zhang, Yuansheng Ni, et~al. 2024.
\newblock A comprehensive study of knowledge editing for large language models.
\newblock \emph{arXiv preprint arXiv:2401.01286}.

\bibitem[{Zhang et~al.(2023)Zhang, Chen, Bukharin, He, Cheng, Chen, and Zhao}]{zhang2023adaptive}
Qingru Zhang, Minshuo Chen, Alexander Bukharin, Pengcheng He, Yu~Cheng, Weizhu Chen, and Tuo Zhao. 2023.
\newblock Adaptive budget allocation for parameter-efficient fine-tuning.
\newblock In \emph{International Conference on Learning Representations}.

\bibitem[{Zhang et~al.(2022)Zhang, Zuo, Liang, Bukharin, He, Chen, and Zhao}]{zhang2022platon}
Qingru Zhang, Simiao Zuo, Chen Liang, Alexander Bukharin, Pengcheng He, Weizhu Chen, and Tuo Zhao. 2022.
\newblock Platon: Pruning large transformer models with upper confidence bound of weight importance.
\newblock In \emph{International conference on machine learning}, pages 26809--26823. PMLR.

\bibitem[{Zheng et~al.(2023)Zheng, Li, Dong, Fan, Wu, Xu, and Chang}]{zheng2023can}
Ce~Zheng, Lei Li, Qingxiu Dong, Yuxuan Fan, Zhiyong Wu, Jingjing Xu, and Baobao Chang. 2023.
\newblock Can we edit factual knowledge by in-context learning?
\newblock \emph{arXiv preprint arXiv:2305.12740}.

\bibitem[{Zhu et~al.(2020)Zhu, Rawat, Zaheer, Bhojanapalli, Li, Yu, and Kumar}]{zhu2020modifying}
Chen Zhu, Ankit~Singh Rawat, Manzil Zaheer, Srinadh Bhojanapalli, Daliang Li, Felix Yu, and Sanjiv Kumar. 2020.
\newblock Modifying memories in transformer models.
\newblock \emph{arXiv preprint arXiv:2012.00363}.

\bibitem[{Zhu and Gupta(2017)}]{zhu2017prune}
Michael Zhu and Suyog Gupta. 2017.
\newblock To prune, or not to prune: exploring the efficacy of pruning for model compression.
\newblock \emph{arXiv preprint arXiv:1710.01878}.

\end{thebibliography}

\newpage
\appendix
\setcounter{lemma}{0} 
\setcounter{proposition}{0} 
\section{Proof of Proposition~1}
\begin{lemma}
\label{lemma:1}
    For $\bm{a}\in\mathbb{R}^{1\times d_{2}}$ and $\bm{b}\in\mathbb{R}^{d_{1}\times 1}$, where the sparsity of them is $s(\bm{a})=s_{a}$ and $s(\bm{b})=s_{b}$ respectively, we have $s(\bm{b}\bm{a})=s_a+s_b-s_{a}s_{b}$.
\end{lemma}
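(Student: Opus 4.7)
The plan is to work directly with entries of the rank-one outer product $\bm{b}\bm{a}$ and count zeros by inclusion-exclusion on the index sets. Since $\bm{a}$ is a row vector and $\bm{b}$ is a column vector, the $(i,j)$ entry of $\bm{b}\bm{a}$ is simply the scalar product $b_i a_j$. This product vanishes if and only if $b_i = 0$ or $a_j = 0$, a fact that reduces the sparsity calculation to a counting problem over row and column index sets.

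First I would let $Z_a = \{j : a_j = 0\} \subseteq \{1,\dots,d_2\}$ and $Z_b = \{i : b_i = 0\} \subseteq \{1,\dots,d_1\}$. By the definition of sparsity, $|Z_a| = s_a d_2$ and $|Z_b| = s_b d_1$. The zero set of $\bm{b}\bm{a}$ is exactly $(Z_b \times \{1,\dots,d_2\}) \cup (\{1,\dots,d_1\} \times Z_a)$, since an entry is zero iff its row index lies in $Z_b$ or its column index lies in $Z_a$. Applying inclusion-exclusion,
\begin{align}
\|\bm{b}\bm{a}\|_0^{(0)} = |Z_b|\, d_2 + d_1\, |Z_a| - |Z_b|\,|Z_a|,
\end{align}
where I write $\|\cdot\|_0^{(0)}$ for the number of zero entries. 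Substituting the cardinalities gives $s_b d_1 d_2 + s_a d_1 d_2 - s_a s_b d_1 d_2$, and dividing by the total entry count $d_1 d_2$ yields $s(\bm{b}\bm{a}) = s_a + s_b - s_a s_b$, exactly as claimed.

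An equivalent and slightly slicker route, which I would use as a cross-check, is to count nonzeros instead: the nonzero entries of $\bm{b}\bm{a}$ form a combinatorial rectangle of size $(1-s_b)d_1 \times (1-s_a)d_2$, giving density $(1-s_a)(1-s_b)$ and sparsity $1 - (1-s_a)(1-s_b)$, which expands to the same expression.

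There is essentially no technical obstacle here; the only subtlety is ensuring the argument uses the exact counting definition of sparsity (fraction of zero entries) rather than a measure-theoretic or asymptotic notion, so that the set-cardinality identities are sharp equalities rather than bounds. Once this is pinned down, the lemma is a one-line inclusion-exclusion, and it will serve as the base case for the $r$-fold sum in the proof of Proposition~1, where each rank-one contribution $\bm{B}_{*i}\bm{A}_{i*}$ is absorbed via a union bound on zero sets across $i = 1,\dots,r$.
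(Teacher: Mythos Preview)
Your proof is correct and takes essentially the same approach as the paper: both arguments count zeros in the outer product via the observation that $(\bm{b}\bm{a})_{ij} = b_i a_j$ vanishes iff $b_i = 0$ or $a_j = 0$. The paper sums zeros row by row (case-splitting on whether $b_i = 0$), whereas you apply inclusion-exclusion on the zero index sets directly; these are the same elementary counting argument phrased slightly differently.
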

\begin{proof}
Define the number of zero values in a vector or matrix as $z(\cdot)$. Consider the $i$-th row of $\bm{b}\bm{a}$, i.e. $\bm{b}_{i}\bm{a}$. If $\bm{b}_{i}=0$, then $\bm{b}_{i}\bm{a}=\bm{0}$. If $\bm{b}_{i}\neq0$, then the number of zeros depends on the number of zeros of $\bm{a}$. Therefore, we have
\begin{align}
    z(\bm{b}_{i}\bm{a})=\left\{
\begin{aligned}
d_{2} & , & \bm{b}_{i}=0, \\
s_{a}d_{2} & , & \bm{b}_{i}\neq0.
\end{aligned}
\right.
\end{align}
Then we have
\begin{align}
    \notag z(\bm{b}\bm{a})=&\sum_{i=1}^{d_{1}}z(\bm{b}_{i}\bm{a})\\
    \notag=&d_{2}s_{b}d_{1}+s_{a}d_{1}d_{2}(1-s_{b})\\
    =&d_{1}d_{2}(s_{a}+s_{b}-s_{a}s_{b}).
\end{align}
So the sparsity of $\bm{b}\bm{a}$ is
\begin{align}
    \notag s(\bm{b}\bm{a})&=\frac{d_{1}d_{2}(s_{a}+s_{b}-s_{a}s_{b})}{d_{1}d_{2}}\\
    &=s_{a}+s_{b}-s_{a}s_{b}.
\end{align}
\end{proof}

\begin{proposition}
The sparsity of $\bm{B}\bm{A}$ is greater or equal to $\max\{0,1+\sum_{i=1}^{r}(s(\bm{A}_{i*})+s(\bm{B}_{*i})-s(\bm{A}_{i*})s(\bm{B}_{*i}))-r\}$.
\end{proposition}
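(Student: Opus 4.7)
The plan is to write $\bm{B}\bm{A}$ as a sum of $r$ rank-one outer products and then reduce the claim to counting common zero positions across these outer products, for which Lemma~\ref{lemma:1} already gives us the per-term sparsity.

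First I would use the standard outer-product decomposition
\begin{align}
\bm{B}\bm{A}=\sum_{i=1}^{r}\bm{B}_{*i}\bm{A}_{i*},
\end{align}
so that $\bm{B}\bm{A}$ is a sum of $r$ matrices in $\mathbb{R}^{d_1\times d_2}$. For each $i$, let $Z_i\subseteq\{1,\dots,d_1\}\times\{1,\dots,d_2\}$ be the set of indices at which the outer product $\bm{B}_{*i}\bm{A}_{i*}$ is zero. By Lemma~\ref{lemma:1} applied with $\bm{a}=\bm{A}_{i*}$ and $\bm{b}=\bm{B}_{*i}$, the sparsity of this outer product is exactly $s(\bm{A}_{i*})+s(\bm{B}_{*i})-s(\bm{A}_{i*})s(\bm{B}_{*i})$, so $|Z_i|=d_1 d_2\bigl(s(\bm{A}_{i*})+s(\bm{B}_{*i})-s(\bm{A}_{i*})s(\bm{B}_{*i})\bigr)$, and the set of nonzero positions has size $|Z_i^{c}|=d_1 d_2\bigl(1-s(\bm{A}_{i*})-s(\bm{B}_{*i})+s(\bm{A}_{i*})s(\bm{B}_{*i})\bigr)$.

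Next I would observe that if every summand has a zero at position $(j,k)$, then so does $(\bm{B}\bm{A})_{jk}$; hence the zero set of $\bm{B}\bm{A}$ contains $\bigcap_{i=1}^{r}Z_i$ (cancellation can only create more zeros, which is fine for a lower bound). A union bound on the complements then yields
\begin{align}
\Bigl|\bigcap_{i=1}^{r}Z_i\Bigr|
=d_1 d_2-\Bigl|\bigcup_{i=1}^{r}Z_i^{c}\Bigr|
\geq d_1 d_2-\sum_{i=1}^{r}|Z_i^{c}|.
\end{align}
Substituting the expression for $|Z_i^{c}|$, dividing by $d_1 d_2$, and simplifying gives
\begin{align}
s(\bm{B}\bm{A})\;\geq\;1-r+\sum_{i=1}^{r}\bigl(s(\bm{A}_{i*})+s(\bm{B}_{*i})-s(\bm{A}_{i*})s(\bm{B}_{*i})\bigr).
\end{align}
Combining this with the trivial bound $s(\bm{B}\bm{A})\geq 0$ gives exactly the claimed $\max$-bound.

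The only mildly subtle point, and the one I would emphasize in the writeup, is the direction of the bound at the union-of-supports step: we cannot write $s(\bm{B}\bm{A})$ as an equality in terms of the $s(\bm{A}_{i*}), s(\bm{B}_{*i})$ because (i) distinct outer products may share nonzero positions, and (ii) cancellation across summands may introduce additional zeros. Both effects are handled correctly here: (i) is absorbed by the Bonferroni/union bound (which is why we end up with a $\max$ against $0$ once $r$ is large), and (ii) only strengthens the inequality because we lower-bound the zero set of $\bm{B}\bm{A}$ by the common zero set of its summands. No further machinery beyond Lemma~\ref{lemma:1} and a union bound is required.
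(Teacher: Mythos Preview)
Your proposal is correct and follows essentially the same approach as the paper: decompose $\bm{B}\bm{A}$ into the sum of rank-one outer products $\bm{B}_{*i}\bm{A}_{i*}$, apply Lemma~\ref{lemma:1} to each term, and then lower-bound the common zero set via a union bound on the nonzero supports (what the paper phrases as the ``worst case'' of non-overlapping nonzero positions). Your explicit remark about cancellation only helping the inequality is a nice clarification that the paper leaves implicit.
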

\begin{proof}
First, we have
\begin{align}
\notag (\bm{B}\bm{A})_{ij}&=\sum_{k=1}^{r}\bm{B}_{ik}\bm{A}_{kj}\\
&= \sum_{k=1}^{r}(\bm{B}_{*k}\bm{A}_{k*})_{ij}.
\end{align}
Consider the worst case: the positions of nonzero value of $\{\bm{B}_{*k}\bm{A}_{k*}\}$ does not have any overlapping, we at least have $\max\{0,d_{1}d_{2}-\sum_{i=1}^{r}(1-s(\bm{B}_{*i}\bm{A}_{i*}))d_{1}d_{2}\}$ zero values. 

Therefore, based on Lemma~\ref{lemma:1} the sparsity of $\bm{B}\bm{A}$ satisfies
\begin{align}
    \notag & s(\bm{B}\bm{A}) \\
    \geq&\frac{\max\{0,d_{1}d_{2}-\sum_{i=1}^{r}(1-s(\bm{B}_{*i}\bm{A}_{i*}))d_{1}d_{2}\}}{d_{1}d_{2}} \notag\\
    \notag =&\max\{0,1+\sum_{i=1}^{r}s(\bm{B}_{*i}\bm{A}_{i*})-r\}\\
    \notag=&\max \bigg\{0,1+\sum_{i=1}^{r}\big(s(\bm{A}_{i*})+s(\bm{B}_{*i})\bigg.\\
    &\big.-s(\bm{A}_{i*})s(\bm{B}_{*i})\big)-r\bigg\}.
\end{align}
\end{proof}

\begin{table*}[h!]
\centering
\caption{Hyper-parameters used in knowledge editing, commonsense reasoning and arithmetic reasoning.}
\label{tab:hyper}
\resizebox{\textwidth}{!}{%
\begin{tabular}{@{}cccccccc@{}}
\toprule
Dataset              & lr   & Rank & Batch size & Sparsity & $\beta$                 & $\alpha$ & Target modules                         \\ \midrule
WikiData recent      & 2e-4 & 4    & 1          & 0.95     & \multirow{7}{*}{0.8} & 3e-3  & "up\_proj", "down\_proj", "gate\_proj" \\ 
WikiData counterfact & 2e-4 & 4    & 1          & 0.95     &                      & 3e-3  & "up\_proj", "down\_proj", "gate\_proj" \\
ZsRE                 & 2e-4 & 4    & 1          & 0.95     &                      & 3e-3  & "up\_proj", "down\_proj", "gate\_proj" \\
WikiBio              & 2e-4 & 4    & 1          & 0.95     &                      & 3e-3  & "up\_proj", "down\_proj", "gate\_proj" \\
Commonsense170K      & 2e-4 & 32   & 8          & 0.865    &                      & -     & "q\_proj","v\_proj"                    \\
Math10K              & 3e-4 & 32   & 32         & 0.865    &                      & -     & "q\_proj","v\_proj"                    \\
Instruction tuning   & 3e-4 & 32   & 32         & 0.85     &                      & -     & "q\_proj","v\_proj"                    \\ \bottomrule
\end{tabular}%
}
\end{table*}

\begin{table*}[h!]
\centering
\caption{Hyper-parameters and metrics used in GLUE benchmark.}
\label{tab:glue_data}
\begin{tabular}{@{}cccccccc@{}}
\toprule
Dataset & Metric        & lr   & Rank               & Batch size & Sparsity              & $\beta$                 & Target modules                                                                                                        \\ \midrule
CoLA    & Matthews corr & 2e-4 & \multirow{8}{*}{6} & 16         & \multirow{8}{*}{0.95} & \multirow{8}{*}{0.8} & \multirow{8}{*}{\begin{tabular}[c]{@{}c@{}}"query",\\"key",\\"value",\\ "output.dense",\\"intermediate.dense"\end{tabular}} \\
SST-2   & Accuracy      & 2e-4 &                    & 32         &                       &                      &                                                                                                                       \\
MRPC    & Accuracy      & 2e-4 &                    & 32         &                       &                      &                                                                                                                       \\
QQP     & Accuracy      & 1e-4 &                    & 32         &                       &                      &                                                                                                                       \\
STS-B   & Pearson corr  & 2e-4 &                    & 32         &                       &                      &                                                                                                                       \\
MNLI    & Accuracy      & 2e-4 &                    & 32         &                       &                      &                                                                                                                       \\
QNLI    & Accuracy      & 2e-4 &                    & 32         &                       &                      &                                                                                                                       \\
RTE     & Accuracy      & 6e-4 &                    & 32         &                       &                      &                                                                                                                       \\ \bottomrule
\end{tabular}%
\end{table*}

\section{Datasets, Metrics and Hyper-parameters}
We conduct experiments on five different benchmarks:
\begin{itemize}[leftmargin=1em]
    \item Knowledge editing consists of four datasets, including WikiData$_{\text{recent}}$, WikiData$_{\text{counterfact}}$ \citep{cohen2024evaluating}, ZsRE \citep{yao2023editing}, and WikiBio \citep{hartvigsen2024aging}. 
    For the knowledge editing tasks, the model should memorize new knowledge while preserving knowledge which does not need to update. Following \citet{zhang2024comprehensive}, we use four metrics to evaluate the editing performance: 1)~\textbf{Edit Success}, which estimates the accuracy with respect to both the knowledge needed to be updated and the similar expressions of the knowledge, 2)~\textbf{Locality}, which shows if the post-edited model keeps its original answer on the locality set, 3)~\textbf{Portability}, which is to measure if the post-edited model can reason correctly about the updated knowledge, and 4)~\textbf{Fluency}, which measures the model’s generation ability after editing via calculating the weighted average of bi-gram and tri-gram entropies.
    
    \item Commonsense reasoning contains of eight datasets, 
    including BoolQ \citep{clark2019boolq}, PIQA \citep{bisk2020piqa}, SIQA \citep{sap2019socialiqa}, HellaSwag \citep{zellers2019hellaswag}, WinoGrande \citep{sakaguchi2021winogrande}, ARC-e, ARC-c \citep{clark2018think}, and OBQA \citep{mihaylov2018can}. 
    These tasks are multiple choice problems. Following \citet{hu2023llm,wu2024reft}, 
    we fine-tune the LLM on a combined training dataset named Commonsense170K of these tasks and evaluate the Accuracy on individual test sets. 
    
    \item Arithmetic reasoning consists of four math reasoning datasets:
    AQuA \citep{ling2017program}, GSM8K \citep{cobbe2021training}, MAWPS \citep{koncel2016mawps}, and SVAMP \citep{patel2021nlp}. 
    Models need to generate correct answers and we use Accuracy as the evaluation metric following \citet{hu2023llm} as well. 
    Again, we replicate the setup in \citet{wu2024reft} and fine-tune the models on the combined training data named Math10K of the four tasks. 
    
    \item Instruction-following measures if the model can follow human instructions. 
    Same as before, we follow \citet{hu2023llm, wu2024reft} and use Ultrafeedback \citep{cui2023ultrafeedback} as the training data, 
    and evaluate the model performance by Alpaca-Eval v1.0 \citep{li2023alpacaeval}.
    
    \item Natural language understanding consists of eight datasets from the GLUE benchmark \citep{wang2018glue}. 
    We adopt the evaluation metrics and setups from \citet{wu2023eva}.
\end{itemize}
We show the hyper-parameters we use in Table~\ref{tab:glue_data} and Table~\ref{tab:hyper}. We conduct experiments based on libraries LLM-Adapters$\footnote{https://github.com/AGI-Edgerunners/LLM-Adapters}$, EasyEdit$\footnote{https://github.com/zjunlp/EasyEdit}$, and lm-evaluation-harness$\footnote{https://github.com/EleutherAI/lm-evaluation-harness}$.

\section{Implementation of Knowledge Editing}
To enable the minimal modification of the LLM, following~\cite{zhang2024comprehensive}, we add one $\ell_{2}$ norm on the low-rank matrices:
\begin{align}
    \notag&\min_{\bm{A},\bm{B}}\quad \mathcal{L}(\mathcal{D};\bm{W}^{o}+\bm{B}\bm{A})\\
    \notag&\text{s.t.} \quad \frac{\|\bm{A}_{i*}\|_{0}}{d_{2}}\leq \tau, \frac{\|\bm{B}_{*i}\|_{0}}{d_{1}}\leq \tau, i=1,...,r,\\
    &\quad \quad \quad \|\bm{A}\|_{F}^{2}\leq \alpha,\|\bm{B}\|_{F}^{2}\leq\alpha,
\end{align}
where $\alpha$ is a hyper-parameter. In each step, after pruning $\bm{A}$ and $\bm{B}$, we clip them to make them satisfy the $\ell_{2}$ norm constraint.
\end{document}